\numberwithin{equation}{section}
\newtheorem{thm}{Theorem}[section]
\newtheorem{lem}[thm]{Lemma}
\newtheorem{prop}[thm]{Proposition}
\newtheorem{cor}[thm]{Corollary}
\theoremstyle{remark}
\newtheorem{rem}[thm]{Remark}
\newtheorem{assumption}{Assumption}
\newcommand{\vertiii}[1]{{\left\vert\kern-0.25ex\left\vert\kern-0.25ex\left\vert #1 
    \right\vert\kern-0.25ex\right\vert\kern-0.25ex\right\vert}}
\newcommand{\RR}{\mathbb{R}}
\newcommand{\RD}{\mathbb{R}^{d}}
\newcommand{\EE}{\mathbb{E}\,}
\newcommand{\calF}{\mathcal{F}}
\newcommand{\calA}{\mathcal{A}}
\newcommand{\calR}{\mathcal{R}}
\newcommand{\calQ}{\mathcal{Q}}
\newcommand{\calG}{\mathcal{G}}
\newcommand{\nnset}{\mathcal{F}(L, \mathbf{p}, s, F)}
\newcommand{\Iamn}{\mathbf{1}_{\mathcal{A}_{t_m}^{(n)}}}
\newcommand{\dd}{\mathrm{d}}
\title{Drift Estimation for Diffusion Processes Using Neural Networks Based on Discretely Observed Independent Paths}
\author{
    % Authors
    Yuzhen Zhao\textsuperscript{\rm 1},
    Yating Liu\textsuperscript{\rm 2},
    Marc Hoffmann\textsuperscript{\rm 3}
}
\begin{document}

\maketitle
\begin{abstract}
This paper addresses the nonparametric estimation of the drift function over a compact domain for a time-homogeneous diffusion process, based on high-frequency discrete observations from $N$ independent trajectories. We propose a neural network-based estimator and derive a non-asymptotic convergence rate, decomposed into a training error, an approximation error, and a diffusion-related term scaling as ${\log N}/{N}$. For compositional drift functions, we establish an explicit rate. In the numerical experiments, we consider a drift function with local fluctuations generated by a double-layer compositional structure featuring local oscillations, and show that the empirical convergence rate becomes independent of the input dimension $d$. Compared to the $B$-spline method, the neural network estimator achieves better convergence rates and more effectively captures local features, particularly in higher-dimensional settings.
\end{abstract}

\begin{links}
    \link{Code}{https://github.com/yuzhen3001/nn-drift-estimation-diffusion-process} 
%    \link{Datasets}{https://aaai.org/example/datasets}
     \link{Extended version}{https://arxiv.org/abs/2511.11161}
\end{links}
%%%%%%%%%%%%%%%%%%%%%%%%%%%%%%%%%%%%%%
%%%%%%%%%%%%%%%%%%%%%%%%%%%%%%%%%%%%%%
%%%%%%%%%%%%%%%%%%%%%%%%%%%%%%%%%%%%%%

\section{Introduction}

In this paper, we study the estimation of the drift function of a time-homogeneous diffusion process using neural networks. Specifically, we consider an $\mathbb{R}^d$-valued diffusion process $X=(X_t)_{t\in[0,T]}$ that solves the stochastic differential equation (SDE)
\begin{equation}\label{eq:sde}
\dd X_t = b(X_t)\,\dd t + \sigma(X_t)\,\dd B_t,
\end{equation}
where $X_0$ is a random vector, $B = (B_t)_{t \in [0, T]}$ is a $d$-dimensional standard  Brownian motion, and  $b : \mathbb{R}^d \to \mathbb{R}^d$ and $\sigma : \mathbb{R}^d \to \mathbb{R}^{d}\otimes \RD$ are measurable functions. To estimate the drift, we assume that $N$ independent and identically distributed (i.i.d.) sample paths $\bar{X}^{(n)} = (\bar X^{(n)}_t)_{t \in [0, T]}$, $1 \le n \le N$, are available, each having the same distribution as $X$ and independent of $X$. Moreover, each sample path is observed at high frequency, that is, at discrete time points on a fine grid:
\begin{equation}\label{eq:train-set}
   \bar{X}^{(n)}_{t_0 : t_M} \coloneqq \big( \bar X^{(n)}_{t_0=0}, \bar X^{(n)}_{t_1}, \dots, \bar X^{(n)}_{t_M=T} \big), \;1\!\leq\! n\!\leq \!N, 
\end{equation}
where $t_m\!=\!m \Delta$ with the time step $\Delta \!=\! \frac{T}{M} \!\to\! 0$ as $M\!\to\!\infty$.
%%%%%%%%%%%%%%%%%%%%%%%%%%%%%%%%%%%%%%
%%%%%%%%%%%%%%%%%%%%%%%%%%%%%%%%%%%%%%
%%%%%%%%%%%%%%%%%%%%%%%%%%%%%%%%%%%%%%
\subsection{Literature Review}

%\color{blue} 
Diffusion processes, as defined by \eqref{eq:sde}, are fundamental stochastic models with various applications in physics, biology, and mathematical finance (see, e.g., \citet{Gardiner2004, Bressloff2014, Karatzas1998}). A central task is the estimation of the drift function, which has been extensively studied under two standard frameworks: the long-time horizon regime, often assuming ergodicity (see, e.g.,  \citet{Hoffmann1999, Kutoyants2004, Dalalyan2005, Frishman2020}), and the high-frequency regime over a fixed time interval (see, e.g., \citet{Comte2020, Denis2021,8516991}).

In recent years, neural networks have become widely used as effective tools for nonparametric function estimation in statistical learning (see, e.g., \citet{Yarotsky2017, SchmidtHieber2020}). They have also been successfully applied to drift estimation in diffusion processes under the long-time horizon setting with ergodicity assumptions (see, e.g., \citet{Oga2024, ren2024statistical}). On the other hand, the neural network-based drift estimation in the high-frequency regime over a fixed time interval is beginning to be explored (see, e.g., \cite{gao2024learning, BAE2025116440}), and theoretical analyses remain limited. Our work contributes to this direction by providing convergence guarantees. %has received little direct attention.  Our paper focuses specifically on this setting. %remains underexplored. This paper aims to address this gap.

%\color{black}

%%%%%%%%%%%%%%%%%%%%%%%%%%%%%%%%%%%%%%
%%%%%%%%%%%%%%%%%%%%%%%%%%%%%%%%%%%%%%
%%%%%%%%%%%%%%%%%%%%%%%%%%%%%%%%%%%%%%
\subsection{Contribution and Organization of This Paper}

In this work, we address the problem of estimating the drift function $b$ over a compact set $K \subset \mathbb{R}^d$, using neural networks defined further in \eqref{eq:sparseNN}, based on discrete observations of the process over a fixed and finite time horizon $T$. Importantly, our approach does not rely on the ergodicity of the diffusion process, in contrast to \citet{Oga2024}. Moreover, since the drift function can be estimated component-wise, we can focus on estimating each component separately and combine these estimators to obtain an estimator for $b$. Specifically, for a given $i \in \{1, \dots, d\}$, we consider the $i$-th component of $b$ restricted to the compact domain $[0,1]^d$.  Our estimation target is therefore the function 
\begin{equation}\label{eq:estimation-target}
f_0 \coloneqq b^i \mathbf{1}_{[0,1]^d},
\end{equation} 
where $\mathbf{1}_{[0,1]^d}$ denotes the indicator function of the domain $[0,1]^d$.
It is worth noting that the assumption of a compact domain is standard in the literature (see, e.g., \citet{Hoffmann1999, Comte2007, Oga2024}). % In particular, the choice of $[0,1]^d$ is made for convenience and can be replaced by any compact set $K \subset \mathbb{R}^d$.
In this paper, the choice of $[0,1]^d$ is made for convenience in theoretical analysis and can be replaced by any compact set $K \subset \mathbb{R}^d$. In practice, the compact set $K$ can be determined from the training data distribution (e.g., based on sample coverage or through a sample-splitting procedure).

The main contribution of this paper is Theorem~\ref{thm:main}, which establishes a non-asymptotic upper bound on the $L^2$ estimation risk in terms of the training error from neural network optimization, the approximation error over the neural network class, the number of training samples $N$, and the time step $\Delta$ of the observation grid.  Moreover, under a composition structure assumption on $f_0$, as considered in \citet{SchmidtHieber2020} and \citet{Oga2024}, we derive an explicit upper bound on the test error as a function of $N$.

The paper is organized as follows. In Section~\ref{sec:math-setting}, we present the mathematical framework, along with the assumptions and main result (Theorem~\ref{thm:main}). Section~\ref{sec:numerical-part} provides a simulation study illustrating the proposed method, where we compare the performance of our estimator with that of \citet{Denis2021}, which uses a $B$-spline-based estimator and ridge estimation. The numerical results show that our estimator outperforms the reference method in \citet{Denis2021} in terms of convergence rate and the ability to capture the local fluctuations, especially in high-dimensional settings. Finally, a sketch of the proof of the main result is provided in Section~\ref{sec:proof-sketch}. 

%\color{blue} [[TO DO]]
%\color{black}
%%%%%%%%%%%%%%%%%%%%%%%%%%%%%%%%%%%%%%
%%%%%%%%%%%%%%%%%%%%%%%%%%%%%%%%%%%%%%
%%%%%%%%%%%%%%%%%%%%%%%%%%%%%%%%%%%%%%
\subsection{Notations}

For a vector or matrix $W$, let $|W|$ denote the Euclidean norm (if $W$ is a vector) or the Frobenius norm (if $W$ is a matrix). We write $|W|_{\infty}$ for the maximum-entry norm and $|W|_0$ for the number of nonzero entries. For $\beta \in \RR$, $\lfloor \beta \rfloor$ denotes the largest integer \textit{strictly} smaller than $\beta$. For two sequences $(a_N)$ and $(b_N)$, we write $a_N\lesssim b_N$ if there exists a constant $C$ such that $a_N\leq C b_N$ for all $N$, and we write $a_N\asymp b_N$ if $a_N\lesssim b_N$ and $b_N\lesssim a_N$. %For $x, y \in \mathbb{R}^d$, $\langle x, y \rangle \coloneqq x^{\top} y$ denotes the inner product. 
Moreover, for a function $f : \mathbb{R}^d \to \mathbb{R}$, let $\Vert f\Vert_{\infty}\coloneqq \sup_{x\in[0,1]^d}|f(x)|$. % $|f|_{\sup} \coloneqq \sup_{x \in \mathbb{R}^d} |f(x)|$.\Yating{\cite{Oga2024} utilise $\Vert f\Vert_{\infty}\coloneqq \sup_{x\in[0,1]^d}|f(x)|$, y a une difference ?} 
For a random variable $X$, we write $\mathcal{L}(X)$ for its law. The index $m = 0, \dots, M-1$ corresponds to time steps on the discrete observation grid, while $n = 1, \dots, N$ indexes the samples in the training set.

%%%%%%%%%%%%%%%%%%%%%%%%%%%%%%%%%%%%%%
%%%%%%%%%%%%%%%%%%%%%%%%%%%%%%%%%%%%%%
%%%%%%%%%%%%%%%%%%%%%%%%%%%%%%%%%%%%%%
\section{Mathematical Setting and Main Result}\label{sec:math-setting}

This section presents the mathematical setting and main result of the paper. We begin by introducing the neural network class used for estimation, followed by a description of the proposed estimator and the different loss functions considered.  Finally, we present the assumptions and main result of the paper.

%%%%%%%%%%%%%%%%%%%%%%%%%%%%%%%%%%%%%%
%%%%%%%%%%%%%%%%%%%%%%%%%%%%%%%%%%%%%%
%%%%%%%%%%%%%%%%%%%%%%%%%%%%%%%%%%%%%%

\subsection{Neural Network Class for Estimation}

We use feedforward neural networks to estimate the drift function component-wise. Specifically, let $\mathcal{F}_{L, \mathbf{p}}$ denote the class of functions defined by neural networks with $L$ hidden layers, layer widths given by $\mathbf{p} = (p_0, p_1, \dots, p_{L+1})\in\mathbb{N}^{L+2}$, where $p_0 = d$ is the input dimension and  $p_{L+1} = 1$ is the output dimension. Each $f \in \mathcal{F}_{L, \mathbf{p}}$ is a function from $\RD$  to $\RR$ of the form
\begin{equation}
\label{eq:NN}
f(x)=W_{L} \sigma_{\mathbf{v}_{L}} W_{L-1} \sigma_{\mathbf{v}_{L-1}} \cdots W_{1} \sigma_{\mathbf{v}_{1}} W_{0} x,
\end{equation}
where  $W_{i}$  is a  $p_{i+1} \times p_{i}$  weight matrix, $\sigma(x)=\max(x, 0)$ is the ReLU function applied component-wise, and for $\mathbf{v}=(v_1, ..., v_r)\in\RR^r$, $\sigma_{\mathbf{v}} : \RR^r\rightarrow\RR^r$ denotes the shifted ReLU function with shift vector $\mathbf{v}$, defined by  
\begin{equation}
\sigma_{\mathbf{v}}\big(\,(y_1, \dots, y_r)^{\top}\,\big)=\big(\sigma\left(y_{1}-v_{1}\right), \dots, 
\sigma(y_{r}-v_{r})\big)^{\top}.\nonumber
\end{equation}

%\begin{equation}
%\sigma_{\mathbf{v}}\left(\begin{array}{c}
%y_{1} \\
%\vdots \\
%y_{r}
%\end{array}\right)=\left(\begin{array}{c}
%\sigma\left(y_{1}-v_{1}\right) \\
%\vdots \\
%\sigma\left(y_{r}-v_{r}\right)
%\end{array}\right).\nonumber
%\end{equation}

In this paper, we consider neural networks with sparsity parameter $s$, and we restrict the neural network functions to be uniformly bounded by a constant $F>0$. Since the target function $f_0$ is supported on $[0,1]^d$ (see \eqref{eq:estimation-target}), we define the class of neural network estimators $\mathcal{F}(L, \mathbf{p}, s, F)$ as follows:
\begin{align}
&\mathcal{F}(L, \mathbf{p}, s, F) 
:= \Big\{ f\mathbf{1}_{[0,1]^d} \::\: f\in \mathcal{F}_{L, \mathbf{p}} \text{ such that }\nonumber\\
&\qquad   \max_{j=0,\dots,L} \big( \|W_j\|_{\infty} \vee |\mathbf{v}_j|_{\infty} \big) \le 1,\; \big\| f \big\|_{\infty} \le F,\nonumber \\
&\qquad   \text{ and } \sum_{j=0}^{L} \big( \|W_j\|_0 + |\mathbf{v}_j|_0 \big) \le s \Big\}.
\label{eq:sparseNN}
\end{align}
When there is no ambiguity, we abbreviate $\mathcal{F}(L, \mathbf{p}, s, F)$ as $\calF$ for simplicity, and we also use the notation $\mathcal{F}(L,\mathbf{p},s)\coloneqq\mathcal{F}(L,\mathbf{p},s, \infty)$.

%\begin{rem}
%  \bb  some remarks on  NN if need [[TO DO]]\dd 
%\end{rem}

%%%%%%%%%%%%%%%%%%%%%%%%%%%%%%%%%%%%%%
%%%%%%%%%%%%%%%%%%%%%%%%%%%%%%%%%%%%%%
%%%%%%%%%%%%%%%%%%%%%%%%%%%%%%%%%%%%%%

\subsection{Estimator and Loss Functions}

Recall that our training data $\mathcal{D}_N$ for this estimation problem consists of discrete observations from independent sample paths of solutions to \eqref{eq:sde}, that is,
\begin{equation}
\mathcal{D}_N=\Big\{\bar{X}_{t_0:t_M}^{(n)}=\big( \bar X^{(n)}_{t_0},\dots, \bar X^{(n)}_{t_M} \big), \:1\!\leq \!n \!\leq\! N\Big\}.\nonumber
\end{equation}
For a fixed $n$, let $\bar{X}^{(n),i}$ denote the $i$-th component of $\bar{X}^{(n)}$. The classical approach for estimating $b^i$ considers the increment
\begin{equation}\label{eq:defY}
    Y_{t_m}^{(n)} \coloneqq \frac{1}{\Delta}\left(\bar{X}^{(n),i}_{t_{m+1}}-\bar{X}^{(n),i}_{t_{m}}\right),
\end{equation}
which is approximately equal to $b^i(\bar{X}^{(n)}_{t_{m}})$ plus a noise term (see \citet[Appendix C]{Denis2021}, \citet[Equation (4.1)]{Oga2024}; see also \eqref{eq:decompos-Y}). Unlike in classical regression with neural networks (see \citet{SchmidtHieber2020}), the errors in our setting are neither normally distributed nor independent of $b^i(\bar{X}^{(n)}_{t_{m}})$, which presents a key challenge in analyzing the estimator’s performance.

Based on the above definition of $Y_{t_m}^{(n)}$, our neural network training procedure consists in minimizing
\begin{equation}\label{eq:train-nn-loss}
    \calQ_{\mathcal{D}_N}(f)\coloneqq \frac{1}{NM}\sum_{n=1}^{N} \sum_{m=0}^{M-1} \left(Y^{(n)}_{t_m} - f(\bar{X}^{(n)}_{t_m}) \right)^2
\end{equation}
over $f\in\mathcal{F}(L, \mathbf{p}, s, F)$. In practice, finding an exact minimizer of $\calQ_{\mathcal{D}_N}(f)$ may be challenging. To quantify the discrepancy between our estimator $\hat{f}=\hat{f}_{N}$, which is the output of the neural network training procedure applied to the empirical loss $\calQ_{\mathcal{D}_N}(f)$, and an exact minimizer of $\calQ_{\mathcal{D}_N}(f)$ over $\mathcal{F}(L, \mathbf{p}, s, F)$, we define
\begin{equation}\label{eq:def-psi}
    \Psi^{\mathcal{F}}\big(\hat{f}\,\big)\coloneqq \EE\left[\calQ_{\mathcal{D}_N}(\hat{f}\,)-\!\!\inf_{f\in\mathcal{F}(L, \mathbf{p}, s, F)}\calQ_{\mathcal{D}_N}(f)\right].
\end{equation}
%to measure the discrepancy between our estimator $\hat{f}$, which is the output of the training procedure with respect to $\calQ_{\mathcal{D}_N}(f)$, and an exact minimizer of $\calQ(f)$ over $\mathcal{F}(L, \mathbf{p}, s, F)$. 

The performance of $\hat{f}$ is measured by the estimation risk
\begin{equation}\label{eq:test-error}
\mathcal{R}(\hat{f}, f_0)\coloneqq \EE \!\left[\frac{1}{M}\sum_{m=0}^{M-1}\Big(\widehat{f}(X_{t_m})-f_0(X_{t_m})\Big)^2\right]. 
\end{equation}
where $X=(X_t)_{t\in[0,T]}$ is a solution to \eqref{eq:sde} that is independent of the training set $\mathcal{D}_{N}$. In practice (see also the numerical experiments in Section~\ref{sec:numerical-part}), the estimation risk $\mathcal{R}(\hat{f}, f_0)$ is often estimated by the following empirical test error
\begin{equation}\label{eq:test-empirical-error}
\widetilde{\mathcal{R}}(\hat{f}, f_0)\coloneqq \frac{1}{MN'}\!\sum_{n=1}^{N'}\sum_{m=0}^{M-1}\!\Big(\widehat{f}(\widetilde{X}_{t_m}^{(n)})-f_0(\widetilde{X}_{t_m}^{(n)})\Big)^2\!,
\end{equation}
where $\widetilde{X}^{(n)}=(\widetilde{X}^{(n)}_t)_{t\in[0,T]}, \,1\leq n\leq N',$ are i.i.d. sample paths, independent of the training set $\mathcal{D}_{N}$.

%%%%%%%%%%%%%%%%%%%%%%%%%%%%%%%%%%%%%%
%%%%%%%%%%%%%%%%%%%%%%%%%%%%%%%%%%%%%%
%%%%%%%%%%%%%%%%%%%%%%%%%%%%%%%%%%%%%%

\subsection{Assumptions and Main Result}
Throughout this paper, we impose the following two assumptions. Assumption~\ref{assum:lip} ensures the existence and uniqueness of the solution to the SDE~\eqref{eq:sde}, while Assumption~\ref{assum:parameter} is a technical condition that guarantees the convergence result stated in Theorem~\ref{thm:main}.

%\smallskip

\begin{assumption}
%\begin{manualtheorem}{I}
\label{assum:lip} The initial random variable $X_0$ satisfies $\EE[|X_0|^2]<+\infty$. 
The coefficient functions $b$ and $\sigma$ are globally Lipschitz continuous; that is, there exist constants $L_b, L_{\sigma} > 0$ such that for all $x, y \in \mathbb{R}^d$,
\begin{align*}
|b(x)-b(y)| \leq L_b |x-y|, \quad |\sigma(x)-\sigma(y)| \leq L_{\sigma} |x-y|.
\end{align*}
\end{assumption}

%\end{manualtheorem}
%\smallskip
Since $b$ and $\sigma$ are continuous, we define
\[C_b := \sup_{x \in [0,1]^d} |b(x)| < \infty\text{ and }C_{\sigma}:=\sup_{x \in [0,1]^d} |\sigma(x)| < \infty.\]
It is obvious that $\Vert f_0\Vert_{\infty}\leq C_b$. 
\begin{assumption}
\label{assum:parameter}
$F\geq \max(C_b,1)$, $C_\sigma>0$, $L\geq1$, $s\geq2$, $N\geq2$ and $\Delta \leq 1$. %\Yating{[Add other constant assumptions here if needed]}
\end{assumption}

%\vspace{0.2cm}

%%%%%%%%%%%%%%%%%%%%%%%%%%%%%%%%%%%%%%
%%%%%%%%%%%%%%%%%%%%%%%%%%%%%%%%%%%%%%
%%%%%%%%%%%%%%%%%%%%%%%%%%%%%%%%%%%%%%

%\subsection{}

Our main result is presented in the following theorem.

\begin{thm}\label{thm:main}
%Test error $\leq$ something depending on $\Delta, N, \Psi(f), \inf_{f\in\mathcal{F}}\Vert f-f_0\Vert_{\infty}^2$
Suppose that Assumptions \ref{assum:lip} and \ref{assum:parameter} hold.
There exists a constant $\mathfrak{C}$, depending only on $C_b$, $C_\sigma$, $L_b$, $L_\sigma$, $T$, and a universal constant $C$ (introduced later in Lemma~\ref{lem:lemma413-in-oga}), such that
%There exists a constant $\mathfrak{C}$ depending on $C_b, C_{\sigma}, L_{b}, L_{\sigma}, T$ and the universal constant $C$ defined further in Lemma \ref{lem:lemma413-in-oga} such that 
\begin{align}
&\mathcal{R}(\hat{f},f_{0})
\leq 4\Psi^{\calF}(\hat{f})+ 6\inf_{f\in\nnset}\calR(f,f_0)\nonumber\\
&+\mathfrak{C}F^2\left( \Delta +\frac{s(L \log s + \log d)+s\log 4F}{N}+s\frac{\log N}{N}\right). \nonumber
%& +\frac{44F^2\log\big(Cs(L\log s+\log d+\log 4F-\log \bar\varepsilon)\big)}{N},\nonumber
    \end{align}
\end{thm}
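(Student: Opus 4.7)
The plan is an oracle-inequality analysis in the spirit of \citet{SchmidtHieber2020,Oga2024}, adapted to handle the non-Gaussian, time-correlated noise produced by the diffusion increments.

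First, from \eqref{eq:sde} and \eqref{eq:defY}, on the event $\{\bar X^{(n)}_{t_m}\in[0,1]^d\}$ I decompose
\begin{equation*}
Y^{(n)}_{t_m}=f_0(\bar X^{(n)}_{t_m})+\xi^{D,(n)}_{t_m}+\xi^{M,(n)}_{t_m},
\end{equation*}
with the drift-discretization bias $\xi^{D,(n)}_{t_m}=\Delta^{-1}\!\int_{t_m}^{t_{m+1}}\!\big(b^i(\bar X^{(n)}_s)-b^i(\bar X^{(n)}_{t_m})\big)\,\dd s$, which by Assumption~\ref{assum:lip} and standard SDE moment estimates satisfies $\EE[(\xi^{D,(n)}_{t_m})^2]\lesssim\Delta$, and the martingale part $\xi^{M,(n)}_{t_m}=\Delta^{-1}\!\int_{t_m}^{t_{m+1}}\!\sigma^{i}(\bar X^{(n)}_s)\,\dd B^{(n)}_s$ (the $i$-th row of $\sigma$), a conditionally centered increment with variance of order $C_\sigma^2/\Delta$. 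Writing $\|g\|_N^2:=\frac{1}{NM}\sum_{n,m}g(\bar X^{(n)}_{t_m})^2$ and expanding the squares in $\calQ_{\mathcal{D}_N}(\hat f)-\calQ_{\mathcal{D}_N}(f^\star)$ using this decomposition, for any fixed $f^\star\in\nnset$ I obtain the basic oracle inequality
\begin{equation*}
\EE\|\hat f-f_0\|_N^2\le\EE\|f^\star-f_0\|_N^2+2\,\EE\langle\xi^D+\xi^M,\hat f-f^\star\rangle_N+\Psi^{\calF}(\hat f).
\end{equation*}
The bias cross-term is handled by Cauchy--Schwarz and AM-GM, yielding a contribution $\lesssim F^2\Delta$ after absorbing $\tfrac18\EE\|\hat f-f^\star\|_N^2$ on the left.

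The main obstacle is the uniform control of the martingale cross-term $\langle\xi^M,\hat f-f^\star\rangle_N$ over the data-dependent element $\hat f-f^\star\in\calF-\calF$. Conditioning on the past $\sigma$-algebra $\mathcal{F}_{t_m}$ makes $\bar X^{(n)}_{t_m}$ measurable and turns $\xi^{M,(n)}_{t_m}$ into an independent, conditionally sub-Gaussian martingale difference. Combining a Bernstein-type martingale inequality with the sup-norm covering estimate $\log\mathcal{N}(\delta,\calF,\|\cdot\|_\infty)\lesssim s(L\log s+\log d)+s\log(F/\delta)$ from \citet[Lemma~5]{SchmidtHieber2020}, followed by a peeling argument along dyadic scales of $\|\hat f-f^\star\|_N$---precisely the role of Lemma~\ref{lem:lemma413-in-oga}---gives, for $\delta$ of order $1/N$,
\begin{equation*}
2\,\EE\langle\xi^M,\hat f-f^\star\rangle_N\le\tfrac18\EE\|\hat f-f^\star\|_N^2+\mathfrak{C}F^2\tfrac{s(L\log s+\log d)+s\log 4F}{N}.
\end{equation*}
The delicate point is that the $\Delta^{-1}$ inflation of the noise variance is cancelled by the $\Delta$-scaling of the Bernstein quadratic-variation proxy, $\sum_m \Delta\,\EE[(\xi^{M,(n)}_{t_m})^2]\asymp C_\sigma^2$, so the resulting bound does not deteriorate as $\Delta\to 0$; verifying this cancellation inside the chaining is the most delicate technical step of the argument.

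Finally, because $\calR(\hat f,f_0)$ is defined on an independent path $X$ whereas the preceding estimates involve the training empirical norm, one further uniform-deviation step is needed. Applying the same covering estimate to the squared-difference class $\{(f-f_0)^2:f\in\calF\}$ together with a Talagrand/Bernstein inequality yields $\calR(\hat f,f_0)\le 2\,\EE\|\hat f-f_0\|_N^2+\mathfrak{C}F^2 s\log N/N$, which accounts for the $s\log N/N$ piece of the statement. Collecting the three bounds, taking the infimum over $f^\star\in\nnset$ in $\EE\|f^\star-f_0\|_N^2=\calR(f^\star,f_0)$, and rearranging the absorptions (whose coefficients $\tfrac14+\tfrac18<\tfrac12$ dictate the final multiplicative constants) yields the announced inequality with the factors $4$ in front of $\Psi^{\calF}(\hat f)$ and $6$ in front of $\inf_{f\in\nnset}\calR(f,f_0)$.
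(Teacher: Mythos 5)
Your proposal follows the same architecture as the paper's proof: the same decomposition of $Y^{(n)}_{t_m}$ into $f_0$, a drift-discretization bias and a martingale increment, the same empirical-norm oracle inequality isolating $\Psi^{\calF}(\hat f)$ and the approximation error, the covering bound of Lemma~\ref{lem:lemma413-in-oga} for the network class, and a final symmetrization step transferring the empirical norm to the test risk $\calR(\hat f,f_0)$ (the paper performs this transfer first, via Lemma~\ref{lem:A-2-denis} applied to the squared-difference class, but the order is immaterial). The bias cross-term and the final bookkeeping of constants are handled exactly as in the paper.

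The one step that would not go through as written is your treatment of the martingale cross-term. You assert that $\xi^{M,(n)}_{t_m}=\Delta^{-1}\int_{t_m}^{t_{m+1}}\sigma^{i}(\bar X^{(n)}_s)\,\dd B^{(n)}_s$ is conditionally sub-Gaussian given $\mathcal F_{t_m}$ and feed it into a Bernstein-type inequality plus chaining. Under Assumption~\ref{assum:lip} alone, $\sigma$ is only Lipschitz, hence unbounded, and the integrand $\sigma^{i}(\bar X^{(n)}_s)$ for $s>t_m$ is not controlled even on the event $\{\bar X^{(n)}_{t_m}\in[0,1]^d\}$; the conditional law of this stochastic integral therefore admits no uniform sub-Gaussian or bounded-quadratic-variation proxy, and the Bernstein/chaining step lacks the input it needs. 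The paper circumvents this in two ways: for the net-discretization part $\hat f-f_{\mathfrak{n}^*}$ it splits $\Sigma^{(n)}_{t_m}$ into a frozen-coefficient integral $\bar\Sigma^{(n)}_{t_m}$ (whose integrand $\sigma^{i}(\bar X^{(n)}_{t_m})$ is bounded by $C_\sigma$ wherever the test functions are nonzero, which is what makes the Dudley entropy integral applicable) plus a remainder $\tilde\Sigma^{(n)}_{t_m}$ handled in $L^2$; and for the net part $f_{\mathfrak{n}^*}-f_0$ it uses the self-normalized exponential martingale inequality of \citet[Lemma 4.9]{Oga2024}, dividing by $\sqrt{\bar{A}(f)_T+\overline{D}_f}$, so that no a priori bound on the quadratic variation is required and the localization you propose to obtain by peeling comes for free from the normalization. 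Your observation that the $\Delta^{-1}$ inflation of the noise is cancelled by the $\Delta$-scaling of the quadratic variation is correct and is exactly what the computation of $\EE[\bar{A}(f_{\mathfrak{n}^*})_T]$ delivers. The gap is thus repairable by the coefficient-freezing/self-normalization device, but as stated the sub-Gaussianity premise is false and the concentration step has nothing to run on.
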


\begin{rem}
Theorem~\ref{thm:main} provides a decomposition of the upper bound on the estimation risk into three components. The first term reflects the error arising from the training procedure for finding a global minimizer with respect to the empirical loss $\calQ_{\mathcal{D}_N}$. The second term can be bounded by the classical regression error for i.i.d. samples, as analyzed in \citet{SchmidtHieber2020}, and does not depend on the diffusion setting. Finally, the third term captures the error due to the temporal dependence inherent in the diffusion process.
\end{rem}

Now let 
\begin{align}
&\mathcal{C}_{r}^{\beta}(D, K)=\bigg\{f:D\subset \RR^r\rightarrow \RR :
\sum_{\boldsymbol{\alpha}:|\boldsymbol{\alpha}|<\beta}\Vert \partial^{\boldsymbol{\alpha}}f\Vert_{\infty}\nonumber\\
&\qquad +\sum_{\boldsymbol{\alpha}:|\boldsymbol{\alpha}|=\lfloor\beta\rfloor}\sup_{x,y\in D, x\neq y}\frac{|\partial^{\boldsymbol{\alpha}}f(x)-\partial^{\boldsymbol{\alpha}}f(y)|}{|x-y|_{\infty}^{\beta - \lfloor\beta\rfloor}}\leq K\bigg\}.\nonumber
\end{align}
Let $\mathcal{G}(q,\mathbf{d}, \mathbf{t},\boldsymbol{\beta}, K)$ be the function space defined in \citet{SchmidtHieber2020}:
\begin{align}
&\mathcal{G}(q,\mathbf{d}, \mathbf{t},\boldsymbol{\beta}, K)\coloneqq \big\{ f=g_q\circ \dots g_0:\nonumber\\
&\qquad  g_i=(g_{ij})_j:[a_i, b_i]^{d_i}\rightarrow [a_{i+1}, b_{i+1}]^{d_{i+1}}, \nonumber\\
&\qquad  g_{ij}\in\mathcal{C}_{t_i}^{\beta_i}([a_i, b_i]^{t_i}, K), \text{ for some }|a_i|, |b_i|\leq K\big\}.\nonumber
\end{align}
with $\mathbf{d}\coloneqq (d_0, \dots, d_{q+1})$, $\mathbf{t}\coloneqq(t_0, \dots, t_q)$, $\boldsymbol{\beta}\coloneqq (\beta_0,\dots, \beta_q)$.
Define $\beta_i^*\coloneqq \beta_i\prod_{l=i+1}^{q}(\beta_l\wedge 1)$ and 
\begin{equation}\label{eq:def-phi-N}
    \phi_{N}\coloneqq \max_{0\leq i \leq q}N^{-\frac{2\beta_i^*}{2\beta_{i}^*+t_i}}.
\end{equation}

\begin{cor}\label{cor:cor-of-thm-main} Suppose that Assumptions \ref{assum:lip} and \ref{assum:parameter} hold.
Assume moreover that $f_0\in \mathcal{G}(q,\mathbf{d}, \mathbf{t},\boldsymbol{\beta}, K)$ and the neural network estimators set $\nnset$ satisfies %$\hat{f}\in\nnset$ with
\begin{enumerate}
    \item[$\mathrm{(i)}$] $F\geq \max(K,1)$, $L\asymp \log_2N$ %$\sum_{i=0}^q \log_2(4t_i \vee 4 \beta_i)\log_2 n\leq L \lesssim n\phi_n $,
    \item[$\mathrm{(ii)}$] $N\phi_N\lesssim \min_{i=1, ..., L}p_i$, $s\asymp N\phi_N \log N$.
\end{enumerate}
Then there exists a constant $C$  depending  on $q, \mathbf{d}, \mathbf{t}, \boldsymbol{\beta}, F, C_b, C_\sigma, L_b, L_\sigma, T$ such that if \[\Delta \lesssim \phi_N\log^3N \text{ and }\Psi^{\calF}(\hat f)\leq C \phi_N\log^3N,\] then $\calR (\hat f, f_0)\leq C \phi_N \log^3 N$.
%Then if $\hat{f}\in \nnset$ satisfies that $\Psi^{\calF}(\hat f)\leq C \phi_n\log^3n$ for some constant $C$
%If $f_0\in \mathcal{G}(q,\mathbf{d}, \mathbf{t},\boldsymbol{\beta}, K)$ and let  be the estimator such that %there exist functions \[g_i :[a_i, b_i]^{d_i}\rightarrow [a_{i+1}, b_{i+1}]^{d_{i+1}}, \,0\leq i\leq q,\] such that $f_0=g_q\circ g_{q-1} \circ \cdots \circ g_1 \circ g_0$ and 
%\bb if $f_0$ satisfies blablabla and $\Psi(f)=$
%Test error $\leq$ something depending on $\Delta, N$\dd 
\end{cor}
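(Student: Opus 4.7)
The plan is to apply Theorem~\ref{thm:main} directly and then bound each of its three summands by $C\phi_N\log^3 N$ under the prescribed parameter scaling.

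For the training error term, the assumption $\Psi^{\calF}(\hat f)\leq C\phi_N\log^3 N$ gives the bound for free, so the factor $4\Psi^{\calF}(\hat f)$ already has the desired order. The hardest conceptual step is the approximation error $6\inf_{f\in\nnset}\calR(f,f_0)$, since this is where the compositional structure must be exploited. I would invoke the neural network approximation result of \citet{SchmidtHieber2020} (the construction used in the proof of their Theorem~1): under conditions (i) and (ii) of the corollary on $L$, the widths $p_i$, the sparsity $s$, and on $F\geq\max(K,1)$, for every $f_0\in \mathcal{G}(q,\mathbf{d},\mathbf{t},\boldsymbol{\beta},K)$ there exists $\tilde f\in\nnset$ with $\|\tilde f - f_0\|_\infty^2\lesssim \phi_N\log^3 N$. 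Since $\calR(\tilde f,f_0)\leq \|\tilde f-f_0\|_\infty^2$ by the definition in \eqref{eq:test-error} (the integrand is pointwise bounded), this immediately yields $\inf_{f\in\nnset}\calR(f,f_0)\lesssim \phi_N\log^3 N$.

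For the third term, I would simply substitute the scalings $L\asymp\log_2 N$ and $s\asymp N\phi_N\log N$. The assumption $\Delta\lesssim\phi_N\log^3 N$ handles the $\Delta$ contribution. The dominant piece of the bracket is $\tfrac{sL\log s}{N}\asymp \phi_N\log N\cdot \log N\cdot\log(N\phi_N\log N)\lesssim \phi_N\log^3 N$, using $\log(N\phi_N\log N)\lesssim \log N$. The remaining pieces $\tfrac{s\log d}{N}$, $\tfrac{s\log 4F}{N}$, $\tfrac{s\log N}{N}$ are all of order at most $\phi_N\log^2 N$ since $d$ and $F$ are constants in the setting of the corollary. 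Multiplying by $F^2$ (a constant) preserves the order.

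Collecting the three contributions yields $\calR(\hat f,f_0)\lesssim \phi_N\log^3 N$, and absorbing all hidden constants into a single $C$ depending only on $q,\mathbf{d},\mathbf{t},\boldsymbol{\beta},F,C_b,C_\sigma,L_b,L_\sigma,T$ proves the claim. The only genuine obstacle is verifying that the Schmidt-Hieber approximation construction lands in our class $\nnset$ defined in \eqref{eq:sparseNN}, in particular that the weight/shift bound $\max_j(\|W_j\|_\infty\vee|\mathbf{v}_j|_\infty)\leq 1$, the sparsity bound, and the uniform bound $\|f\|_\infty\leq F$ are simultaneously met; this is exactly the content of the approximation theorem in \citet{SchmidtHieber2020} under conditions (i)--(ii), so no new estimate is required.
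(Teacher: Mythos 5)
Your proposal is correct and follows essentially the same route as the paper: the paper's own proof simply combines Theorem~\ref{thm:main} with the approximation result behind Theorem~1 of \citet{SchmidtHieber2020} to control $\inf_{f\in\nnset}\calR(f,f_0)$, and absorbs the remaining terms under the stated scalings of $L$, $s$, and $\Delta$. Your more detailed bookkeeping of the third term (in particular $\tfrac{sL\log s}{N}\lesssim\phi_N\log^3N$) and the observation $\calR(\tilde f,f_0)\leq\|\tilde f-f_0\|_\infty^2$ are exactly the steps the paper leaves implicit.
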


%%%%%%%%%%%%%%%%%%%%%%%%%%%%%%%%%%%%%%
%%%%%%%%%%%%%%%%%%%%%%%%%%%%%%%%%%%%%%
%%%%%%%%%%%%%%%%%%%%%%%%%%%%%%%%%%%%%%
\section{Numerical Experiments}\label{sec:numerical-part}

In this section, we evaluate the performance of our estimator using the following drift functions:
\begin{align}\label{eq:b-example}
x\in\RD \mapsto b(x) = -x+\phi\left(\tfrac{s(x)}{\theta}\right)\mathbf{1}_d,  
\end{align}
where $\mathbf{1}_d=(1, \dots, 1)^{\top}$, $s(x)\coloneqq\sum_{i=1}^{d} x_i$, $\theta=0.2$ and $\phi(z) \coloneqq 2z\exp(-z^2)-8z\exp(-2z^2)$. In the following, Figure~\ref{fig:capture-fluctuations}a \textit{(True function)} illustrates 
$b(x)$ in one dimension. The initial condition $X_0$ is set as a random variable having standard normal distribution $\mathcal{N}(0, I_{d})$, and the diffusion coefficient $\sigma(x)$ is set to the identity matrix. 

The numerical study consists of three parts. First, we examine the convergence rate of our estimator with respect to $N$. Then, we compare its performance to the $B$-spline-based estimator proposed in \citet{Denis2021}, focusing on two aspects: the convergence rate and the ability to capture the local fluctuations induced by the function $\phi$ in the drift. Finally, we conclude this section with a summary of the comparative results, followed by additional remarks on the memory requirements and computational feasibility of our estimator compared to the $B$-spline-based approach.

%\begin{figure}[H]
%    \centering
%    \begin{subfigure}{0.4\linewidth}
%        \includegraphics[width=\linewidth]{figures/phi_x.png}
%        \label{fig:sub1}
%    \end{subfigure}
%    \quad
%    \begin{subfigure}{0.4\linewidth}
%        \includegraphics[width=\linewidth]{figures/b_x.png}
%        \label{fig:sub2}
%    \end{subfigure}
%    \caption{The function $\phi$ and $b$ in dimension 1}
%    \label{fig:compare-convergence}
%\end{figure}

\subsection{Experimental Setup}

Throughout this section, we fix the time horizon to $T = 1$ and the time step to $\Delta = 0.01$, and we estimate the first component of the drift coefficient $b_1(x) : \mathbb{R}^d \rightarrow \mathbb{R}$. We set the test set size $N'=1000$, and the estimation risk is approximated using its empirical version defined in~\eqref{eq:test-empirical-error}. The experiments are carried out for a range of dimensions $d \in \{1, 2, 10, 50\}$ and training size $N \in \{100, 200, 500, 1000, 2000, 5000\}$. 

For the neural network estimator, we perform hyperparameter tuning for each combination of $ (N, d) $. We consider different hidden layer width 
\[\mathbf{p}\in \{(d, 16, 16, 1), (d, 16, 32, 16, 1), (d, 16, 32, 32, 16, 1)\}.\] The number of non-zero parameters $s$ is set as a proportion $s_{\text{ratio}}$ of the total number of parameters, with $s_{\text{ratio}} \in \{0.25, 0.5, 0.75\}$. For the training procedure, we use the Adam optimizer with a learning rate of $ 10^{-3} $.  The final number of training epochs is determined via early stopping triggered after 20 consecutive epochs.

Moreover, for each configuration, we repeat the experiment 50 times independently and compute the average error along with the corresponding 95\% confidence interval. All implementations are performed using  PyTorch.

\subsection{Numerical Study of the Convergence Rate}

We first examine how the estimation error evolves with the number of training trajectories $N$. In the setting of Corollary \ref{cor:cor-of-thm-main}, we have $\phi_N=N^{-1}$ for the first component of the drift function $b$ defined in \eqref{eq:b-example}. Hence, if 
$\Psi^{\calF}(\hat f)\leq C \phi_N\log^3N$,  then the theoretical upper bound of $\calR (\hat f, f_0)$ will be $CN^{-1}\log^3 N$ for some constant $C$.  

Figure~\ref{fig:convergence} shows the log–log plot of the estimation risk $\calR (\hat f, f_0)$ as a function of $N$ for $d \in \{1,2 ,10, 50\}$. The empirical convergence rate closely matches the theoretical upper bound for $d\in\{1,2,50\}$. For $d=10$, however, the empirical error slightly exceeds the theoretical envelope for certain values of $N$. This deviation likely results from the restricted hyperparameter search space, which covered only a limited range of learning rates, training epochs, network widths and depths, and regularization parameters. However, the search grid was intentionally constrained to ensure comparability and to maintain a reasonable computational cost across dimensions.

\subsection{Comparison with $B$-Spline-Base Estimator}

We now compare our estimator with the $B$-spline-based estimator from \citet{Denis2021}, using the same training and test datasets. The implementation of the $B$-spline estimator follows the procedure described in \citet{Denis2021}, with an adaption to the multivariate setting using tensor-product $B$-splines basis (see e.g. \citet{gyorfi2002distribution}), and the number of knots selected based on validation performance.

Figure~\ref{fig:compare-convergence} demonstrates that our method achieves a better convergence rate of the estimation risk in $N$. %, with $N$ on the horizontal axis and the estimation error $\calR (\hat f, f_0)$ on the vertical axis. %, particularly when $N$ is large.
Furthermore, as illustrated in Figure~\ref{fig:capture-fluctuations}, our estimator ($\mathbf{p} =  (d, 16, 32, 16, 1), s_{\text{ratio}}
 = 0.75$) more accurately captures the local variations of the drift function, while the $B$-spline-based method fails to recover these fine-scale features.

\begin{figure}[H]
    \centering
    % First row
    \begin{subfigure}{0.49\linewidth}
        \includegraphics[width=\linewidth]{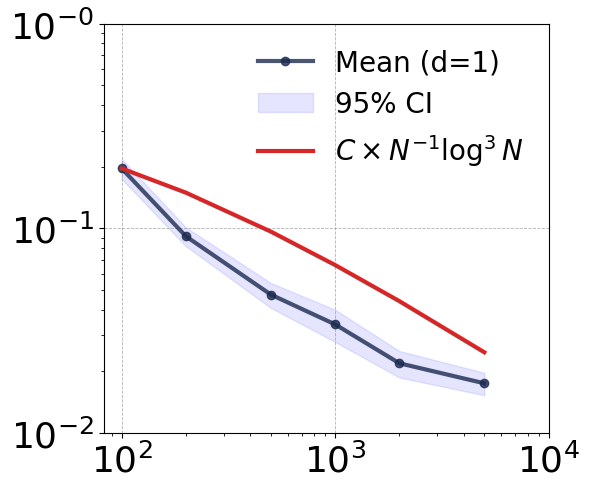}
%        \caption{}
        \label{fig:sub1}
    \end{subfigure}
    \hfill
    \begin{subfigure}{0.49\linewidth}
        \includegraphics[width=\linewidth]{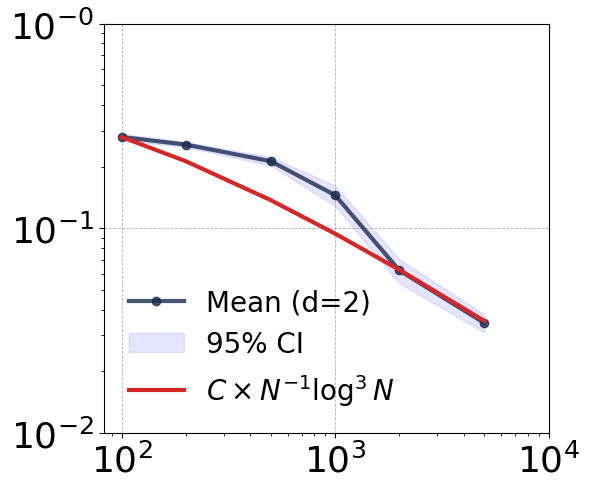}
%       \caption{}
        \label{fig:sub2}
    \end{subfigure}

    \begin{subfigure}{0.49\linewidth}
        \includegraphics[width=\linewidth]{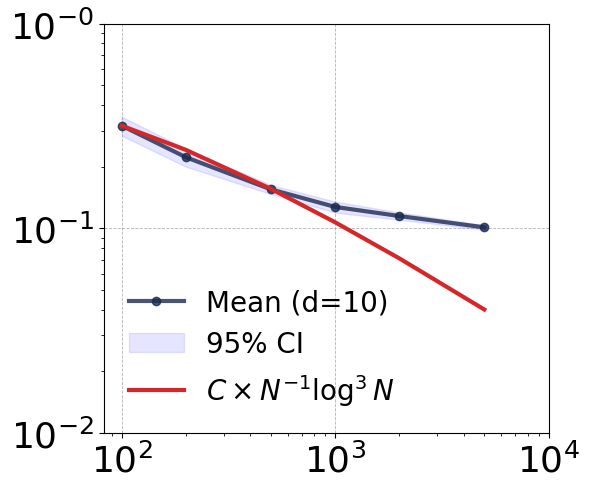}
%        \caption{}
        \label{fig:sub4}
    \end{subfigure}
    \hfill
    \begin{subfigure}{0.49\linewidth} 
        \includegraphics[width=\linewidth]{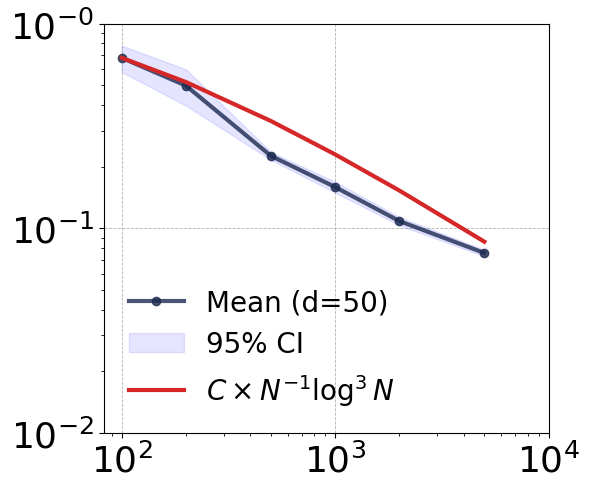}
   %     \caption{}
        \label{fig:sub5}
    \end{subfigure}

    \caption{Convergence rates of our estimator for $ d = 1 $ (top left), $ d = 2 $ (top right), $ d = 10 $ (bottom left), and $ d = 50 $ (bottom right). 
}
    \label{fig:convergence}
\end{figure}

%Figure \ref{fig:convergence} shows that all convergence rates are upper bounded by $CN^{-\frac{2}{3}}\log^3 N$, which confirms the theoretical upper bound established in Corollary \ref{cor:cor-of-thm-main}.
%\bb The drift function $b(x)$ can be decomposed by using the compostion struction as stated in \eqref{}\dd 

\begin{figure}[H]
    \centering
    % First row
    \begin{subfigure}{0.49\linewidth}
        \includegraphics[width=\linewidth]{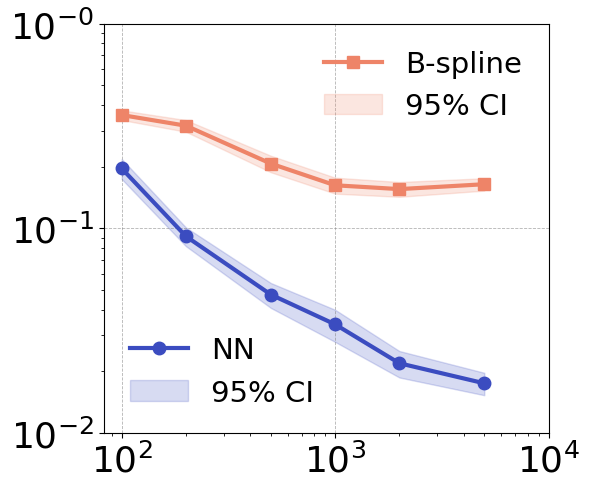}
%        \caption{Comparison of convergence rate $d=1$}
        \label{fig:sub1}
    \end{subfigure}
    \hfill
    \begin{subfigure}{0.49\linewidth}
        \includegraphics[width=\linewidth]{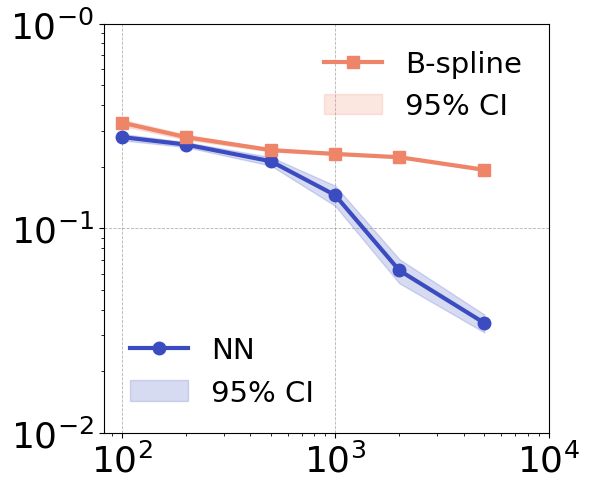}
%        \caption{Comparison in $d=2$}
        \label{fig:sub2}
    \end{subfigure}

    \caption{Comparison of convergence rates for $ d = 1 $ (left) and $ d = 2 $ (right). }
    \label{fig:compare-convergence}
\end{figure}

\begin{figure*}[t]
\centering
\begin{subfigure}{0.2\textwidth}
\includegraphics[width=1\linewidth]{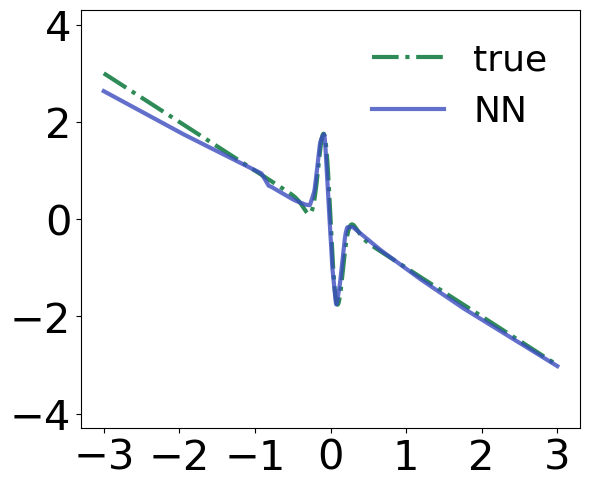}
\caption{NN estimator, 1D}
\label{fig:sub1}
\end{subfigure}
\begin{subfigure}{0.2\textwidth}
\includegraphics[width=1\linewidth]{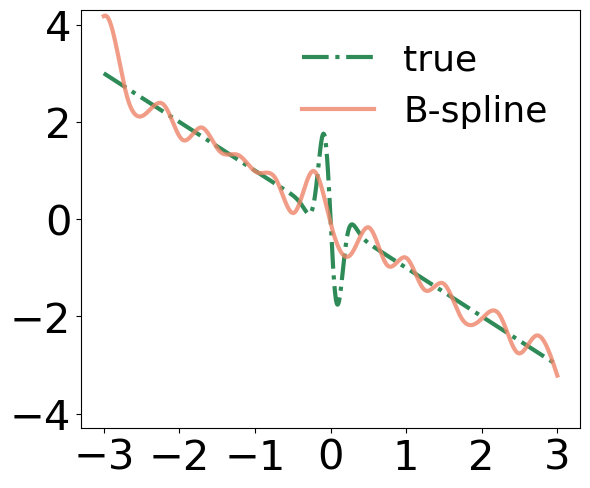}
\caption{$B$-spline estimator, 1D}
\label{fig:sub2}
\end{subfigure}
\begin{subfigure}{0.19\textwidth}     
\includegraphics[width=1\linewidth]{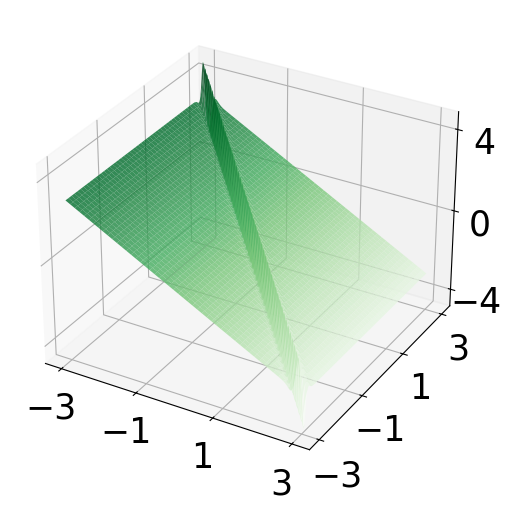}
\caption{True function, 2D}
\label{fig:sub3}
\end{subfigure}
\begin{subfigure}{0.19\textwidth}  
\includegraphics[width=1\linewidth]{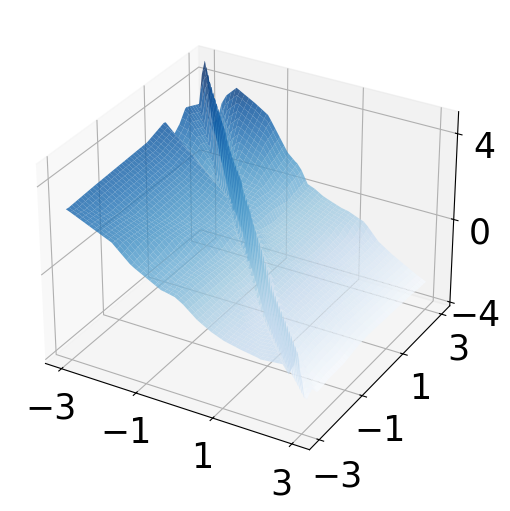}
\caption{NN estimator, 2D}
\label{fig:sub4}
\end{subfigure}
\begin{subfigure}{0.19\textwidth}  
\includegraphics[width=1\linewidth]{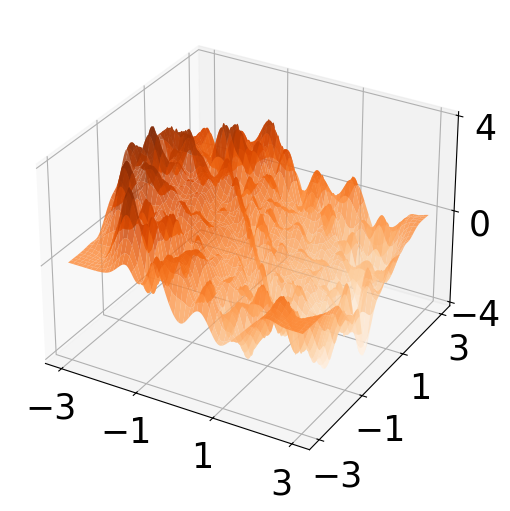}
\caption{$B$-spline estimator, 2D}
\label{fig:sub5}
\end{subfigure}
\caption{Comparison of the ability to capture local fluctuations (N=5000).}
\label{fig:capture-fluctuations}
%For $ d = 2 $, we display $ \hat{f} $  (blue and orange) together with $ f_0 - 5 $ (green) to improve visual clarity.}
\end{figure*}

\subsection{Conclusion}

In summary, Figures~\ref{fig:compare-convergence} and~\ref{fig:capture-fluctuations} clearly demonstrate that our neural network-based estimator outperforms the $B$-spline-based estimator both in terms of convergence rate and its ability to capture local fluctuations.

Furthermore, for the $B$-spline-based estimator, the number of basis functions grows exponentially with the dimension $ d $, leading to a substantial increase in memory requirements. For example, when $ d = 5 $, assuming 8 basis functions per dimension, the resulting coefficient matrix has size $ MN \times 32768 $. If each entry is stored as a 64-bit float (8 bytes), this amounts to over 24GB of memory for $M=100$ and $N = 1000 $, not including the additional memory required for matrix inversion. As a result, the $B$-spline estimator becomes computationally infeasible on a standard laptop even for moderate values of $d$, for example $d\geq 3$. In contrast, the neural network estimator only requires storing the network parameters and minibatch data during training, making it far more scalable in high-dimensional settings.

% {\fontsize{7pt}{7pt}\selectfont This is 7pt text.}
% {\fontsize{8pt}{8pt}\selectfont This is 8pt text.}

%\begin{figure}[H]
%    \centering
%    % First row
%    \begin{subfigure}{0.45\linewidth}
%        \includegraphics[width=\linewidth]{figures/fluc-nn-d-1.png}
%        \caption{NN estimator, $d=1$}
%        \label{fig:sub1}
%    \end{subfigure}
%    \hfill
%    \begin{subfigure}{0.45\linewidth}
%        \includegraphics[width=\linewidth]{figures/fluc-ridge-d-1.png}
%        \caption{$B$-spline estimator, $d=1$}
%        \label{fig:sub2}
%    \end{subfigure}
%    \begin{subfigure}{0.45\linewidth}
%        \includegraphics[width=\linewidth]{figures/fluc-nn-d-2.png}
%        \caption{NN estimator, $d=2$}
%        \label{fig:sub4}
%    \end{subfigure}
%    \hfill
%    \begin{subfigure}{0.45\linewidth} 
%        \includegraphics[width=\linewidth]{figures/fluc-ridge-d-2.png}
%        \caption{$B$-spline estimator, $d=2$}
%        \label{fig:sub5}
%    \end{subfigure}
%    \caption{Comparison of the ability to capture local fluctuations (N=5000).  For $ d = 2 $, we display $ \hat{f} $  (blue and orange) together with $ f_0 - 5 $ (green) to improve visual clarity.}
%    \label{fig:capture-fluctuations}
%\end{figure}

%%%%%%%%%%%%%%%%%%%%%%%%%%%%%%%%%%%%%%
%%%%%%%%%%%%%%%%%%%%%%%%%%%%%%%%%%%%%%
%%%%%%%%%%%%%%%%%%%%%%%%%%%%%%%%%%%%%%
\section{Proof Sketch}\label{sec:proof-sketch}

The proof of Theorem \ref{thm:main} consists of two steps.  In the first step (see  Proposition~\ref{prop:test-train-error-comparaison}), we establish an upper bound of the risk~$\mathcal{R}(\hat{f}, f_0)$ defined in \eqref{eq:test-error} by using the following expected empirical train risk~$\hat{\mathcal{R}}_{\mathcal{D}_N}(\hat{f}, f_0)$, defined by
\begin{align}\label{eq:def-empirical-train-error}
   &\hat\calR_{\mathcal{D}_N}(\hat{f}, f_0)\nonumber\\
   &\coloneqq \mathbb{E}\left[\frac{1}{NM}\sum_{n=1}^{N} \sum_{m=0}^{M-1}(\hat{f}(\bar{X}_{t_m}^{(n)})-f_{0}(\bar{X}_{t_m}^{(n)}))^2\right],
\end{align}
%where $\bar{X}^{(n)}_{t_0:t_{M-1}}$ are random variables in the train set $\mathcal{D}_N$,
and an estimation of the covering number of the neural network estimator set $\mathcal{F}(L, \mathbf{p}, s, F)$.   Recall that for any $\delta>0$, a subset $\mathcal{G}\subset \mathcal{F}$ is called a $\delta$-net of $\mathcal{F}$ with respect to some norm $\Vert \cdot \Vert $ on $\mathcal{F}$, if for every $f\in\calF$, there exists $g\in \calG$ such that $\Vert f-g \Vert \leq \delta$. The minimal cardinality of such $\delta$-net net is called the covering number, denoted by $\mathfrak{N}(\delta, \mathcal{F}, \Vert \cdot \Vert)$ (see e.g. \citet{vanderVaart2023}).  When there is no ambiguity about the training set, we simplify the notation by writing $\hat{\mathcal{R}}(\hat{f}, f_0)$ instead of $\hat{\mathcal{R}}_{\mathcal{D}_N}(\hat{f}, f_0)$. 
In the second step (see Proposition~\ref{prop:upper-bound-train-error}), we derive an upper bound on the expected empirical training risk~$\hat{\mathcal{R}}(\hat{f}, f_0)$ in terms of $\Psi^{\calF}(\hat{f})$, the approximation error $\inf_{f \in \mathcal{F}} \Vert f - f_0 \Vert_{\infty}^2$, the neural network parameters $(s, L, F)$, and the data parameters $\Delta$, $N$, and $d$.

\subsection{Upper Bound of $\mathcal{R}(\hat{f}, f_0)$}

This subsection is devoted to establishing the following proposition.

\begin{prop}\label{prop:test-train-error-comparaison}
Suppose that Assumptions \ref{assum:lip} and \ref{assum:parameter} hold. Fix an arbitrary $\bar\varepsilon \in (0,1)$, we have 
\begin{align}
&\mathcal{R}(\hat{f},f_{0})
\leq 3\bar\varepsilon+2     \hat\calR(\hat{f}, f_0), \nonumber\\
& +\frac{44F^2\log\big(Cs(L\log s+\log d+\log 4F-\log \bar\varepsilon)\big)}{N},\nonumber
\end{align}
where  $C>0$  is a universal constant.
%Test error $\leq$ train error + something
\end{prop}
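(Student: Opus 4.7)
The proof follows the classical pattern of comparing population risk to empirical risk on a neural network class via uniform concentration. The key structural remark is that, although the observations within a single trajectory are temporally dependent, the trajectories $\bar X^{(n)}$ are i.i.d.~in $n$ and distributed as the fresh path $X$. Hence, for every \emph{fixed} $f \in \calF \coloneqq \nnset$, the variables
\[
Z_n(f) \coloneqq \frac{1}{M}\sum_{m=0}^{M-1}\bigl(f(\bar X^{(n)}_{t_m}) - f_0(\bar X^{(n)}_{t_m})\bigr)^2, \qquad n = 1, \dots, N,
\]
are i.i.d., bounded in $[0, 4F^2]$ (since $\|f\|_\infty \le F$ and $\|f_0\|_\infty \le C_b \le F$), with common mean $\calR(f,f_0)$ and variance at most $4F^2\,\calR(f,f_0)$. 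The intra-trajectory dependence in $m$ is therefore harmless, which is why the sample-size factor in the final bound is $N$ rather than $NM$.

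My plan is: first cover $\calF$ in the supremum norm by a $\bar\varepsilon$-net $\calF_{\bar\varepsilon}$ using the standard log-covering bound for sparse ReLU networks (as in Schmidt-Hieber, 2020, Remark 1),
\[
\log \frakN(\bar\varepsilon, \calF, \|\cdot\|_\infty) \le C\,s\bigl(L\log s + \log d + \log(F/\bar\varepsilon)\bigr);
\]
then apply a Bernstein-type deviation inequality to the i.i.d.~bounded variables $Z_n(f)$ for each fixed $f \in \calF_{\bar\varepsilon}$; and combine the variance bound $\mathrm{Var}(Z_n(f)) \le 4F^2\,\calR(f,f_0)$ with Young's inequality $\sqrt{xy} \le \tfrac{1}{2}(x + y)$ to absorb half of $\calR(f,f_0)$ into the left-hand side. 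A union bound over $\calF_{\bar\varepsilon}$ then produces the factor of $2$ in front of the empirical risk and a complexity term of the appropriate order.

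To transfer the bound from the net to the estimator $\hat f$, I would exploit that $\|\hat f\|_\infty, \|f_0\|_\infty \le F$, so the map $f \mapsto (f-f_0)^2$ is $4F$-Lipschitz on $\calF$ with respect to the sup-norm. Replacing $\hat f$ by its nearest net element $\pi(\hat f) \in \calF_{\bar\varepsilon}$ therefore shifts both the population risk $\calR(\hat f, f_0)$ and the empirical quantity $\tfrac{1}{N}\sum_n Z_n(\hat f)$ by at most $4F\bar\varepsilon$; choosing constants so that these slacks accumulate to at most $3\bar\varepsilon$ gives the first term on the right-hand side. Taking expectations --- either by integrating the high-probability bound, or by selecting the confidence level $\eta$ as a function of $|\calF_{\bar\varepsilon}|$ and handling the residual event via the deterministic bound $\calR(\hat f, f_0) \le 4F^2$ --- converts $\tfrac{1}{N}\sum_n Z_n(\hat f)$ into $\hat\calR(\hat f, f_0)$ and yields the stated inequality.

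The main technical obstacle is the careful bookkeeping of numerical constants through the three intertwined steps --- Bernstein, Young absorption, and net-approximation --- in order to recover the exact coefficients $3$, $2$, and the specific logarithmic form in the complexity term. A secondary subtle point is the passage from a high-probability deviation bound to an expectation bound on $\calR(\hat f, f_0) - 2 \hat\calR(\hat f, f_0)$: the confidence parameter must be tuned as a function of both $|\calF_{\bar\varepsilon}|$ and $\bar\varepsilon$ so that $\EE[\calR(\hat f, f_0)\,\mathbf{1}_{\text{bad event}}]$ is absorbed into the $\bar\varepsilon$ slack, and the deterministic $L^\infty$-bound on $\hat f$ and $f_0$ is essential here.
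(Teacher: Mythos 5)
Your proposal is correct and follows essentially the same route as the paper: the same per-trajectory aggregation into i.i.d.\ variables bounded by $4F^2$ (so that the effective sample size is $N$, not $NM$), the same reduction to a sup-norm covering number of $\nnset$, and the same sparse-ReLU covering bound. The one place you genuinely diverge is the uniform concentration step: you re-derive it from scratch (Bernstein for each fixed net element, Young's inequality to absorb $\tfrac{1}{2}\calR(f,f_0)$ into the left-hand side, a union bound over the net, and an integration of the tail), whereas the paper simply invokes Lemma~A.2 of \citet{Denis2021}, which packages exactly this ``$3\varepsilon + 11L\log\frakN/N$ in expectation'' statement with $L=4F^2$, delivering the coefficients $3$, $2$ and $44F^2=11\cdot 4F^2$ without the constant bookkeeping you flag as the main obstacle. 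One small correction to your net-transfer step: since $F\ge 1$ under Assumption~\ref{assum:parameter}, a $\bar\varepsilon$-net of $\calF$ only controls $f\mapsto(f-f_0)^2$ up to $4F\bar\varepsilon$, which cannot be reduced to $3\bar\varepsilon$ by ``choosing constants''; you must instead take the net at radius $\bar\varepsilon/(4F)$, which is precisely what produces the $\log 4F$ inside the complexity term of the statement.
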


The proof of Proposition \ref{prop:test-train-error-comparaison} relies on the following two lemmas. 

\begin{lem}[Lemma A.2 in \citet{Denis2021}]
\label{lem:A-2-denis}
Let  $X_{1}, \dots, X_{N}$  be independent copies of a random variable $ X \in \mathcal{X}$. Let  $\mathcal{G}$ be a class of real-valued functions on $\mathcal{X}$ . For each $g \in \mathcal{G}$, and  $x \in \mathcal{X}$, we assume that $0 \leq g(x) \leq L$, with $L>0$. We consider $\mathcal{G}_{\delta}$ an $\delta$-net of $\mathcal{G} $ w.r.t. $\|\cdot\|_{\sup}$ and we denote by $\mathfrak{N}(\delta, \mathcal{G}, \Vert \cdot \Vert_{\sup})$ its cardinality. Then, the following holds
\begin{align}
&\mathbb{E}\left[\sup _{g \in \mathcal{G}}\left(\mathbb{E}[g(X)]-\frac{2}{N} \sum_{i=1}^{N} g\left(X_{i}\right)\right)\right] &\nonumber\\
&\qquad \leq 3 \varepsilon+\frac{11 L \log \big(\mathfrak{N}(\delta, \mathcal{G}, \Vert \cdot \Vert_{\sup})\big)}{N}.\nonumber
\end{align}
\end{lem}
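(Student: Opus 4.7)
The plan is to combine a standard covering-number discretization with a Bernstein-type concentration inequality that exploits the ``self-bounded'' relation $\operatorname{Var}(g(X)) \leq L\,\mathbb{E}[g(X)]$ available for nonnegative bounded functions. The factor $2$ in front of the empirical average in the statement is not cosmetic: it is what allows the variance term in Bernstein's inequality, which scales with $\mathbb{E}[g(X)]$, to be absorbed into the right-hand side, leaving a clean exponential tail governed only by the envelope $L$.

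For the discretization step, for each $g \in \mathcal{G}$ I would pick $g_\delta \in \mathcal{G}_\delta$ with $\|g - g_\delta\|_{\sup} \leq \delta$ and write
$$\mathbb{E}[g(X)] - \frac{2}{N}\sum_{i=1}^{N} g(X_i) \;\leq\; \mathbb{E}[g_\delta(X)] - \frac{2}{N}\sum_{i=1}^{N} g_\delta(X_i) + 3\delta,$$
since $|\mathbb{E}[g - g_\delta]| \leq \delta$ and $\big|\tfrac{2}{N}\sum_{i}(g - g_\delta)(X_i)\big| \leq 2\delta$. Taking the supremum over $g \in \mathcal{G}$ and then expectation reduces the problem to controlling $\mathbb{E}\big[\max_{g \in \mathcal{G}_\delta}\big(\mathbb{E}[g(X)] - \tfrac{2}{N}\sum_{i} g(X_i)\big)\big]$ over the finite net $\mathcal{G}_\delta$ of cardinality $\mathfrak{N}(\delta, \mathcal{G}, \|\cdot\|_{\sup})$.

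For the concentration step, fix $g \in \mathcal{G}_\delta$, set $\mu := \mathbb{E}[g(X)]$, and note that $\operatorname{Var}(g(X)) \leq L\mu$ since $0 \leq g \leq L$. The key algebraic observation is that the event $\{\mu - \tfrac{2}{N}\sum_{i} g(X_i) > t\}$ coincides with $\{\mu - \tfrac{1}{N}\sum_{i} g(X_i) > (\mu + t)/2\}$. Plugging $u = (\mu+t)/2$ into the Bernstein tail
$$\mathbb{P}\!\left(\mu - \tfrac{1}{N}\!\sum_{i=1}^{N} g(X_i) > u\right) \leq \exp\!\left(-\frac{N u^{2}}{2L\mu + \tfrac{2L}{3}u}\right)$$
and using elementary algebra, the exponent is lower bounded by a linear function of $t/L$ that is \emph{uniform in the unknown mean} $\mu$. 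A union bound over the $\mathfrak{N}(\delta, \mathcal{G}, \|\cdot\|_{\sup})$ elements of $\mathcal{G}_\delta$, followed by integration of the resulting tail, produces the announced $\frac{11L\log \mathfrak{N}(\delta, \mathcal{G}, \|\cdot\|_{\sup})}{N}$ term.

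The main obstacle is the algebraic bookkeeping in this ``factor-two trick'', namely verifying that $u = (\mu+t)/2$ really does cancel the $\mu$-dependence in the Bernstein exponent and yields a clean linear-in-$t$ bound with an explicit constant. The precise constant $11$ is not conceptually important, but tracking it requires some care; alternatively one could invoke a Talagrand-type inequality for suprema of sums of nonnegative bounded variables to reach the same conclusion more abstractly. Combining the two steps and identifying $\delta$ with $\varepsilon$ (the statement appears to conflate the two symbols in the $3\varepsilon$ term on the right) then yields the announced inequality.
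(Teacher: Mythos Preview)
Your proposal is correct. The paper does not give its own proof of this lemma---it is quoted verbatim from \citet{Denis2021} and used as a black box---so there is no in-paper argument to compare against. The route you outline (pass to a $\delta$-net at cost $3\delta$, then apply Bernstein's inequality with the self-bounding variance estimate $\operatorname{Var}(g(X))\leq L\,\mathbb{E}[g(X)]$, using the ``factor-two'' rewriting $\{\mu-\tfrac{2}{N}\sum_i g(X_i)>t\}=\{\mu-\tfrac{1}{N}\sum_i g(X_i)>(\mu+t)/2\}$ to make the exponent uniform in $\mu$) is exactly the standard argument for this type of inequality, going back to Lemma~11.2 in \citet{gyorfi2002distribution}, on which the result in \citet{Denis2021} is based. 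Your observation that the statement silently identifies $\delta$ with $\varepsilon$ is also correct.
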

\begin{lem}[Lemma 4.13 in \citet{Oga2024}]
\label{lem:lemma413-in-oga}
Let  $\mathcal{F} \subset \mathcal{F}(L, \mathbf{p}, s)$ . If $ s \geq 2$ , we have for all  $\delta \in(0,1) $
$$
\log \mathfrak{N}\left(\delta, \mathcal{F},\|\cdot\|_{\infty}\right) \leq C s(L \log s+\log d-\log \delta)
$$
where  $C>0$  is a universal constant.
\end{lem}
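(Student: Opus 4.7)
The plan is to prove the covering-number bound via a two-stage discretization of the network parameters, calibrated by a Lipschitz-in-parameters estimate. First I discretize the sparsity pattern (i.e., which weights and shifts are non-zero), and then I discretize the values of those active parameters on a uniform grid whose spacing is chosen so that the induced function perturbation is at most $\delta$ in $\|\cdot\|_\infty$ on $[0,1]^d$.

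First I would carry out a preliminary reduction: because at most $s$ parameters are non-zero overall, any hidden neuron with a vanishing column of incoming weights and a vanishing shift can be removed without changing the function, and similarly for neurons with vanishing outgoing weights. After this reduction I may assume the effective hidden widths $p_j$ ($1\le j\le L$) satisfy $p_j \le s$, while keeping $p_0=d$ and $p_{L+1}=1$. For each layer $j$, writing $s_j \ge 0$ for the number of active parameters there, with $\sum_j s_j \le s$, the number of possible supports is at most
\begin{equation*}
\prod_{j=0}^{L}\binom{p_{j+1}p_j+p_{j+1}}{s_j} \le \prod_{j=0}^{L}\bigl(3s(s+d)\bigr)^{s_j},
\end{equation*}
whose logarithm is $\lesssim s(\log s+\log d)$.

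For each fixed support the non-zero parameters live in $[-1,1]^{s'}$ with $s'\le s$, and I would use a uniform $\eta$-grid, giving at most $(2/\eta)^{s}$ parameter vectors. The key estimate, proved by induction over the layers using that $\|W_j\|_\infty\le 1$, $|\mathbf{v}_j|_\infty\le 1$, $\|x\|_\infty\le 1$ on $[0,1]^d$, and that the shifted ReLU is $1$-Lipschitz, is that two networks $f,\widetilde f$ with the same support whose active parameters differ by at most $\eta$ entrywise satisfy
\begin{equation*}
\|\widetilde f-f\|_\infty \le \eta(L+1)V,\qquad V:=\prod_{l=0}^{L+1}(p_l+1).
\end{equation*}
With the effective widths above, $V\le 2(s+1)^{L}(d+1)$, so $\log((L+1)V)\lesssim L\log s+\log d$. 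Choosing $\eta=\delta/((L+1)V)$ then produces a $\delta$-net of the subclass with this support, of log-cardinality $\lesssim s(L\log s+\log d-\log\delta)$.

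Combining the two stages, the union over all supports is a $\delta$-net of $\mathcal{F}$ with log-cardinality at most $Cs(L\log s+\log d-\log\delta)$ for a universal constant $C$, using $L\ge 1$ and $s\ge 2$. The main obstacle is the Lipschitz-in-parameters estimate of the second stage: the entrywise bound $\|W\|_\infty\le 1$ only yields the weak operator inequality $\|Wz\|_\infty\le p\|z\|_\infty$, so a parameter perturbation at an early layer can be amplified by the widths of every subsequent layer, which accounts for the factor $V\sim s^{L}$ and hence the $L\log s$ term in the final bound. Tracking this amplification carefully across the composition of affine maps and shifted ReLUs, and verifying that the perturbation of a single layer remains well-controlled when the others are held fixed, is the technical heart of the proof.
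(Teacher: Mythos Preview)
The paper does not give its own proof of this lemma: it is quoted verbatim as Lemma~4.13 of \citet{Oga2024} and used as a black box in the proofs of Propositions~\ref{prop:test-train-error-comparaison} and~\ref{prop:upper-bound-train-error}. So there is no ``paper's proof'' to compare against here.

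That said, your proposed argument is correct and is essentially the standard route to such bounds, going back to Lemma~5 of \citet{SchmidtHieber2020} (which \citet{Oga2024} adapts). The three ingredients you identify---pruning inactive neurons so that effective hidden widths are at most $s$, counting sparsity patterns, and discretizing active parameters via the layerwise Lipschitz estimate $\|f-\widetilde f\|_\infty\le \eta(L+1)\prod_l(p_l+1)$---are exactly the ones used there, and your calibration $\eta=\delta/((L+1)V)$ with $\log V\lesssim L\log s+\log d$ gives the stated bound. Two small bookkeeping points: (i) in this paper's parametrization the shift vector $\mathbf{v}_j$ lives in $\mathbb{R}^{p_j}$, not $\mathbb{R}^{p_{j+1}}$, so the slot count for layer $j$ should read $p_{j+1}p_j+p_j$ rather than $p_{j+1}p_j+p_{j+1}$, though this does not affect the order of the bound; (ii) your product $\prod_j\binom{\cdot}{s_j}$ is for a fixed decomposition $(s_j)_j$, and summing over all such decompositions contributes an extra factor of at most $(s+1)^{L+1}$, whose logarithm is $O(L\log s)$ and is absorbed into the constant since $s\ge 2$.
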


\begin{proof}[Proof of Proposition \ref{prop:test-train-error-comparaison}]
%First, we define 
%\[
%\tilde{f}_{0} = 
%\begin{cases}
%f_{0}(x) & \text{ if } |f_{0}(x)| \leq F, \\
%\mathrm{sgn}(f_{0}(x))F   & \text{ if } |f_{0}(x)| \geq F,
%\end{cases}
%\]
%and we decompose estimation risk  $\mathcal{R}(\hat{f},f_{0})$ as following,
%\begin{align}
%&\mathcal{R}(\hat{f},f_{0})
%\leq 2\mathbb{E}\left[\frac{1}{M}\sum_{m=0}^{M-1}(\hat{f}( X_{t_m} )-\tilde{f}_{0}( X_{t_m} ))^2\right] \nonumber\\
%&\quad + 2\mathbb{E}\left[\frac{1}{M}\sum_{m=0}^{M-1}(\tilde{f}_{0}( X_{t_m} )-f_{0}( X_{t_m} ))^2\right]. \label{eq:1st-decomp-test-error}
%\end{align}

In what follows, we adopt the notation $X_{t_0: t_{M-1}}$ for $(X_{t_0}, \dots, X_{t_{M-1}} )$ and $   \bar{X}^{(n)}_{t_0 : t_{M-1}}$ for $ \big( \bar X^{(n)}_{t_0},\dots, \bar X^{(n)}_{t_{M-1}} \big)$. For a fixed $\bar f\in\nnset$, we define the function $g_{\bar{f}}$ from $(\RR^d)^M$ to $\RR$  by 
\begin{align}
    & x\!=\!(x_0,\dots,x_{M-1})\mapsto\nonumber\\
    &\qquad g_{\bar{f}}(x)=\!\frac{1}{M}\!\sum_{m=0}^{M-1}\big(\bar{f}(x_m)-{f}_{0}(x_m)\big)^2,\nonumber
\end{align}
which is bounded above by $4F^2$ by Assumption \ref{assum:parameter} and supported on $([0,1]^d)^M$ since both ${f}_0$ and $\bar{f}$ are supported on $[0,1]^d$.  Moreover, we define 
\begin{equation}
\calG\coloneqq \{g_{\bar{f}}:  \bar{f}\in\nnset \},\nonumber
\end{equation}
equipped with the norm $\Vert g\Vert_{\sup}\coloneqq \sup_{x\in([0,1]^d)^M}|g(x)|$. 

The estimation risk $\mathcal{R}(\hat{f}, f_{0})$ can be decomposed by
\begin{align}
\mathcal{R}(\hat{f},f_{0})
\leq &\EE \left[ g_{\hat{f}}(X_{t_0:t_{M-1}})-\frac{2}{N}\sum_{n=1}^{N}g_{\hat{f}}\big(\bar{X}_{t_0:t_{M-1}}^{(n)}\big)\right]  \nonumber\\
&\quad + 2     \hat\calR(\hat{f}, f_0),\nonumber
\end{align}
where the first term can be upper bounded as follows
\begin{align}
&\EE \left[ g_{\hat{f}}(X_{t_0:t_{M-1}})-\frac{2}{N}\sum_{n=1}^{N}g_{\hat{f}}\big(\bar{X}_{t_0:t_{M-1}}^{(n)}\big)\right] \nonumber\\
&\leq \EE \left[ \sup_{g_{\bar{f}}\in \calG}\left(g_{\bar{f}}(X_{t_0:t_{M-1}})-\frac{2}{N}\sum_{n=1}^{N}g_{\bar{f}}\big(\bar{X}_{t_0:t_{M-1}}^{(n)}\big)\right)\right] \nonumber\\
&\leq 3\varepsilon+\frac{44F^2\log\big(\mathfrak{N}(\varepsilon,\mathcal{G}, \Vert \cdot \Vert_{\sup})\big)}{N},\nonumber
\end{align}
for any $\varepsilon\in(0,1)$ by applying Lemma \ref{lem:A-2-denis}.

We now provide an  upper bound on  the covering number $\mathfrak{N}(\varepsilon, \mathcal{G}, \| \cdot \|_{\sup})$ using the covering number over $\nnset$. % \[\mathfrak{N}_{\frac{\varepsilon}{4F}}\coloneqq \mathfrak{N}(\frac{\varepsilon}{4F}, \nnset, \| \cdot \|_{\infty}).\] 
Let $\mathfrak{N}_{\frac{\varepsilon}{4F}}\!\coloneqq \mathfrak{N}(\frac{\varepsilon}{4F}, \nnset, \| \cdot \|_{\infty})$ and 
let $\mathcal{F}_{\text{net}}\coloneqq\{f_1, ..., f_{\mathfrak{N}_{\frac{\varepsilon}{4F}}}\}$ be an $\frac{\varepsilon}{4F}$-net of $\nnset$. Then for every $\bar f\in\nnset$, there exists $\bar f_{\frac{\varepsilon}{4F}}\in \mathcal{F}_{\text{net}}$ such that $\Vert \bar f- \bar f_{\frac{\varepsilon}{4F}}\Vert_{\infty}\leq \frac{\varepsilon}{4F}$. It follows that 
\begin{align}
&\Vert g_{\bar{f}}-g_{\bar f_{\frac{\varepsilon}{4F}}} \Vert_{\infty}\leq \Vert \bar{f}-\bar{f}_{\frac{\varepsilon}{4F}}\Vert_{\infty}\Vert \bar{f}+\bar{f}_{\frac{\varepsilon}{4F}}-2f_0\Vert_{\infty}\leq \varepsilon.\nonumber%\\
%&\quad \leq \frac{\varepsilon}{4F}\cdot 4F= \varepsilon.\nonumber
\end{align}%For every $\bar f, \bar f_{\varepsilon}$
Therefore, Lemma \ref{lem:lemma413-in-oga} implies that
\begin{align*}
&\log\big(\mathfrak{N}(\varepsilon, \mathcal{G}, \| \cdot \|_{\sup})\big)\nonumber\\
& \quad \leq \log \big(\mathfrak{N}(\frac{\varepsilon}{4F}, \nnset, \| \cdot \|_{\infty})\big) \\
& \quad \leq Cs(L\log s+\log d+\log 4F-\log \varepsilon).
\end{align*}
Combining the results above, we get
\begin{align*}
&\mathcal{R}(\hat{f},f_{0}) 
\leq 3\varepsilon+2\hat{\mathcal{R}}(\hat{f},f_{0})\\
& \quad + \frac{44F^2Cs(L\log s+\log d+\log 4F-\log \varepsilon)}{N}.\hfill\qedhere %+ 2\mathbb{E}\left[\frac{1}{M}\sum_{m=0}^{M-1}(\tilde{f}_{0}( X_{t_m} )-f_{0}( X_{t_m} ))^2\right].
\end{align*}
\end{proof}
%\medskip

%%%%%%%%%%%%%%%%%%%%%%%%%%%%%%%%%%%%%%
%%%%%%%%%%%%%%%%%%%%%%%%%%%%%%%%%%%%%%
%%%%%%%%%%%%%%%%%%%%%%%%%%%%%%%%%%%%%%
\subsection{Upper Bound of $\hat{\mathcal{R}}(\hat{f}, f_0)$}
This section aims to establish an upper bound on the expected empirical training risk $\hat{\mathcal{R}}(\hat{f},f_{0})$. We abbreviate $\mathcal{F}$ for the neural network function class $\nnset$, and $\mathfrak{N}_\delta$ for the covering number $\mathfrak{N}(\delta, \nnset, \|\cdot\|_{\infty})$.
\begin{prop}\label{prop:upper-bound-train-error}
Suppose that Assumptions \ref{assum:lip} and \ref{assum:parameter} hold.
There exists a constant $\mathfrak{C}'$ depending on $C_b, C_{\sigma}, L_{b}, L_{\sigma}, T$ and the universal constant $C$ in Lemma \ref{lem:lemma413-in-oga} such that 
\begin{align}
&\hat{\calR}(\hat{f}, f_0)\leq 2\Psi^{\calF}(\hat{f})+ 3\inf_{f\in\mathcal{F}}\calR(f,f_0)\nonumber\\%\Vert f-f_0\Vert_{\infty}^2\nonumber\\
&+\mathfrak{C}'\left(F^2 \Delta +\frac{s(L \log s + \log d)+F}{N}+s\frac{\log N}{N}\right).\nonumber
%&\qquad +2L_{\sigma}^2C_{\star}\big(3+\tfrac{F^2}{C_{\sigma^2}}\big)\Delta+ C_1\frac{1}{N}+C_2\frac{\log N}{N},\nonumber 
\end{align}
%where 
%\begin{align}
%    C_1= & 2\sqrt{2L_\sigma^2 C_{\star}}+\frac{4\bar{C}_{\sigma}}{\sqrt{T}}\sqrt{Cs(L \log s+\log d)}+2F\nonumber\\
%   & +\frac{32 C_{\sigma}^2}{T}\big( \log 2 +2 Cs (L \log s+\log d)\big)\nonumber
%\end{align}
%and $  C_2= \frac{6\bar{C}_{\sigma}\sqrt{Cs}}{\sqrt{T}} +\frac{64 C_{\sigma}^2Cs}{T}$.
%\bb Train error $\leq$ something of $\Delta, N, \Psi(f), \inf_{f\in\mathcal{F}}\Vert f-f_0\Vert_{\infty}^2$\dd 
\end{prop}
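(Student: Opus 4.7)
My approach is to expand the excess empirical loss $\calQ_{\mathcal{D}_N}(\hat f) - \calQ_{\mathcal{D}_N}(f^\star)$ for a near-infimizer $f^\star$ of $f \mapsto \calR(f, f_0)$ over $\calF$, and then control the resulting stochastic cross-term. Setting $\xi^{(n)}_m := Y^{(n)}_{t_m} - f_0(\bar X^{(n)}_{t_m})$, the algebraic identity $(Y - \hat f)^2 - (Y - f^\star)^2 = (\hat f - f_0)^2 - (f^\star - f_0)^2 - 2\xi(\hat f - f^\star)$, combined with the definition of $\Psi^{\calF}(\hat f)$ and the fact that $f^\star$ is data-independent, yields
$$
\hat{\calR}(\hat f, f_0) \;\le\; \Psi^{\calF}(\hat f) + \calR(f^\star, f_0) + 2 T_N, \qquad T_N := \EE\!\Bigl[\tfrac{1}{NM}\!\sum_{n,m}\xi^{(n)}_m\bigl(\hat f - f^\star\bigr)(\bar X^{(n)}_{t_m})\Bigr].
$$
Letting $f^\star$ approach $\inf_{f\in\calF}\calR(f, f_0)$ then takes care of the approximation-error term, and everything reduces to controlling $T_N$.

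Using the SDE \eqref{eq:sde}, I would decompose $\xi^{(n)}_m = \eta^{(n)}_m + \zeta^{(n)}_m + \rho^{(n)}_m$, where $\eta^{(n)}_m := \tfrac{1}{\Delta}\!\int_{t_m}^{t_{m+1}}\!(b^i(\bar X^{(n)}_s) - b^i(\bar X^{(n)}_{t_m}))\,\dd s$ is the drift-discretization error, $\zeta^{(n)}_m := \tfrac{1}{\Delta}\!\int_{t_m}^{t_{m+1}}\!\sigma^i(\bar X^{(n)}_s)\,\dd B^{(n)}_s$ is the stochastic-integral piece, and $\rho^{(n)}_m := b^i(\bar X^{(n)}_{t_m})\mathbf{1}_{\bar X^{(n)}_{t_m}\notin[0,1]^d}$ is a boundary correction arising from $f_0 = b^i\mathbf{1}_{[0,1]^d}$. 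Since $\hat f - f^\star$ vanishes outside $[0,1]^d$, the $\rho$-contribution to $T_N$ is identically zero. For the $\eta$-contribution, Cauchy--Schwarz combined with the standard Itô moment estimate $\EE[(\eta^{(n)}_m)^2\mathbf{1}_{\{\bar X^{(n)}_{t_m}\in[0,1]^d\}}] \lesssim \Delta$ (valid under Assumption~\ref{assum:lip}, with constants depending on $L_b, L_\sigma, C_b, C_\sigma$), followed by AM--GM at weight $\alpha$ and the inequality $(\hat f - f^\star)^2 \le 2(\hat f - f_0)^2 + 2(f^\star - f_0)^2$, gives a bound of the form $T_N^\eta \le \alpha\bigl(\hat{\calR}(\hat f, f_0) + \calR(f^\star, f_0)\bigr) + C F^2\Delta/\alpha$, which accounts for the $F^2\Delta$ term in the final statement.

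The martingale term $T_N^\zeta$ is the main obstacle. For any \emph{deterministic} $f$, the tower property gives $\EE[\zeta^{(n)}_m f(\bar X^{(n)}_{t_m})] = 0$, but this fails for $\hat f$ since it depends on the entire dataset. The standard remedy is a covering argument: take a $\delta$-net $\{f_1,\dots,f_{\frakN_\delta}\}$ of $\calF$ in $\|\cdot\|_\infty$, with $\log \frakN_\delta \lesssim s(L\log s + \log d - \log \delta)$ by Lemma~\ref{lem:lemma413-in-oga}, approximate $\hat f$ by its closest net point $\hat f_\delta$, and split $T_N^\zeta$ into an approximation residual bounded by $\delta\,\EE[\tfrac{1}{NM}\!\sum|\zeta^{(n)}_m|]$ plus a finite maximum $\max_{j,k}\tfrac{1}{NM}\sum\zeta^{(n)}_m(f_j - f_k)(\bar X^{(n)}_{t_m})$. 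The latter is a triangular array of martingale differences whose per-term conditional variance is $O(F^2 C_\sigma^2/\Delta)$, so the aggregate variance of the average is $O(F^2/(NT))$; combined with the sub-Gaussian tails of the Itô integral and a union bound over the $\frakN_\delta^2$ pairs, a Bernstein-type inequality yields an expectation bound of the correct order. Taking $\delta \asymp 1/N$ converts $\log\frakN_\delta$ into $s(L\log s + \log d + \log N)$; a final AM--GM step yields $T_N^\zeta \le \alpha'\bigl(\hat{\calR}(\hat f, f_0) + \calR(f^\star, f_0)\bigr) + (C'/\alpha')\bigl((s(L\log s + \log d) + F)/N + s\log N/N\bigr)$. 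Summing the two noise bounds and choosing $\alpha + \alpha' = 1/4$ allows a fraction of $\hat{\calR}(\hat f, f_0)$ to be absorbed into the left-hand side; rearranging produces exactly the coefficients $2$ in front of $\Psi^{\calF}(\hat f)$ and $3$ in front of $\inf_{f\in\calF}\calR(f, f_0)$ that appear in the statement. The main difficulty is the Bernstein step: because $\zeta^{(n)}_m$ has conditional variance of order $1/\Delta$ and is unbounded, balancing the sub-Gaussian tail against the empirical variance and identifying how $\log N$ enters via the net resolution $\delta$ is the most delicate part of the argument.
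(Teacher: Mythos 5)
Your overall architecture mirrors the paper's: the same basis-function-free decomposition into $\Psi^{\calF}(\hat f)$, an approximation term, a drift-discretization cross term handled by Cauchy--Schwarz plus It\^o moment bounds (giving the $F^2\Delta$ piece), and a martingale cross term handled by a $\delta$-net with $\delta\asymp 1/N$ and a final AM--GM absorption. However, two steps in your treatment of the martingale term $T_N^\zeta$ would fail as written. First, the net-approximation residual: bounding it by $\delta\,\EE\bigl[\tfrac{1}{NM}\sum_{n,m}|\zeta^{(n)}_m|\bigr]$ costs $\EE|\zeta^{(n)}_m|\asymp C_\sigma/\sqrt{\Delta}$, so the residual is of order $\delta/\sqrt{\Delta}=1/(N\sqrt{\Delta})$, which is not dominated by any term in the claimed bound (the proposition must hold for arbitrarily small $\Delta$, with only $\Delta\le 1$ assumed). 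This crude bound discards the cancellation across the $M$ independent Brownian increments. The paper avoids it by splitting $\Sigma^{(n)}_{t_m}=\bar\Sigma^{(n)}_{t_m}+\tilde\Sigma^{(n)}_{t_m}$: the increment part $\tilde\Sigma$ has second moment $O(1)$ (the Lipschitz increment of $\sigma$ eats the $1/\Delta$), while for the frozen-coefficient part $\bar\Sigma$ the process $f\mapsto\tfrac1M\sum_m\bar\Sigma^{(n)}_{t_m}f(\bar X^{(n)}_{t_m})$ is conditionally sub-Gaussian with metric $\tfrac{C_\sigma}{\sqrt T}\|\cdot\|_\infty$ (no $\Delta^{-1/2}$), and a Dudley entropy integral $\int_0^\delta\sqrt{\log\mathfrak N_u}\,\dd u$ gives the residual at the right order.

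Second, the union-bound step: the variance you compute, $O(F^2/(NT))$ from the crude bound $|f_j-f_k|\le 2F$, yields $\EE[\max]\lesssim F\sqrt{\log\mathfrak N_\delta/(NT)}\asymp\sqrt{s\log N/N}$, which is the slow rate; AM--GM cannot then produce $\alpha'\hat\calR(\hat f,f_0)+C's\log N/(\alpha' N)$ because the factor $\hat\calR(\hat f,f_0)$ does not appear under the square root. To get the fast rate the variance proxy must be the empirical $L^2$ distance $\tfrac{C_\sigma^2}{NT}\cdot\tfrac{1}{NM}\sum_{n,m}(f_j-f_0)^2(\bar X^{(n)}_{t_m})$, which is (up to the $\delta$-net error $4F\delta$ and a $\Delta$ correction) proportional to $\hat\calR(\hat f,f_0)/N$; only then does $\sqrt{xy}\le\tfrac{\varepsilon}{4}x+\tfrac1\varepsilon y$ let you absorb a fraction of $\hat\calR(\hat f,f_0)$ into the left-hand side. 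The paper implements exactly this via the self-normalized exponential supermartingale inequality for $\overline M(f)_T/\sqrt{\bar A(f)_T+\overline D_f}$ together with the bound $\EE[\bar A(f_{\mathfrak n^*})_T]\le\tfrac{2C_\sigma^2 T}{N}(\hat\calR(\hat f,f_0)+4F\delta)+\tfrac{8F^2L_\sigma^2C_\star T\Delta}{N}$; your "Bernstein with empirical variance" remark points in the right direction, but the explicit computation you give does not deliver it, and making it rigorous for the unbounded, data-dependent index $\mathfrak n^*$ is precisely the content of the paper's Lemma~\ref{lem:main-diff-from-oga}.
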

To prove Proposition \ref{prop:upper-bound-train-error}, we first note that $Y_{t_m}^{(n)}$, defined in \eqref{eq:defY}, admits the following decomposition :
\begin{align}\label{eq:decompos-Y}
Y^{(n)}_{t_m}= &\, b^{i}(\bar{X}^{(n)}_{t_m}) \nonumber\\
&+ \frac{1}{\Delta}\int_{t_m}^{t_{m+1}}(b^i(\bar{X}^{(n)}_s)-b^i(\bar{X}^{(n)}_{t_m}))\dd s \nonumber\\
&+ \frac{1}{\Delta}\int_{t_m}^{t_{m+1}}\sigma^i(\bar{X}^{(n)}_{s})\dd B_{s}^{(n)} \nonumber\\
%&+ \frac{1}{\Delta}\int_{t_m}^{t_{m+1}}\big(\sigma^i(\bar{X}^{(n)}_s)-\sigma^i(\bar{X}^{(n)}_{t_m})\big)\dd B_{s}^{(n)} \nonumber\\
\eqqcolon &\, b^{i}(\bar{X}^{(n)}_{t_m})+ I^{(n)}_{t_m} + {\Sigma}^{(n)}_{t_m}, %\bar{\Sigma}^{(n)}_{t_m}+ \tilde{\Sigma}^{(n)}_{t_m}.
\end{align}
where $B^{(n)}, 1\leq n\leq N$ are i.i.d.  $d$-dimensional standard Brownian motion. 
%We also define ${\Sigma}^{(n)}_{t_m}=\bar{\Sigma}^{(n)}_{t_m}+ \tilde{\Sigma}^{(n)}_{t_m}$ for each $m$ and $n$. 
Given any two functions $f_1, f_2$ in $\nnset$, we define %the functional
\[\Psi_{\mathcal{D}_N}(f_1,f_2)\coloneqq \EE \left[ \calQ_{\mathcal{D}_N}(f_1)- \calQ_{\mathcal{D}_N}(f_2)\right]\] 
with $\calQ_{\mathcal{D}_N}$ defined in \eqref{eq:train-nn-loss}. Let $\bar{f}$ be an arbitrary function in $\calF$. A direct computation yields the following decomposition of $\hat{\mathcal{R}}(\hat{f}, f_0)$ :
%Then the expected empirical training risk $\hat{\mathcal{R}}(\hat{f}, f_0)$ can be decomposed as follows:
\begin{align}\label{eq:decomposition-train-error}
&\hat{\mathcal{R}}(\hat{f},f_{0})  =\Psi_{\mathcal{D}_N}(\hat{f},\bar{f})\\
& \quad +\mathbb{E}\left[\frac{1}{NM}\sum_{n=1}^{N}\sum_{m=0}^{M-1}(\bar{f}(\bar{X}_{t_m}^{(n)})-f_{0}(\bar{X}_{t_m}^{(n)}))^2\right]\nonumber \\
& \quad + 2\mathbb{E}\left[\frac{1}{NM}\sum_{n=1}^{N}\sum_{m=0}^{M-1}I_{t_m}^{(n)}\Big(\hat{f}(\bar{X}_{t_m}^{(n)})-\bar{f}(\bar{X}_{t_m}^{(n)})\Big)\right].\nonumber\\
& \quad + 2\mathbb{E}\left[\frac{1}{NM}\sum_{n=1}^{N}\sum_{m=0}^{M-1}\Sigma_{t_m}^{(n)}\Big(\hat{f}(\bar{X}_{t_m}^{(n)})-{f}_0(\bar{X}_{t_m}^{(n)})\Big)\right],\nonumber
\end{align}
where the last term follows from the fact that $\bar{f}$ and $f_0$ are deterministic (i.e., independent of the training set), and $\EE [\Sigma_{t_m}^{(n)}]=0$ and independent of $\bar{X}_{t_m}^{(n)}$ for each $m$ and $n$. 

Let $\mathcal{F}_{\text{net}} \coloneqq \{f_1, \dots, f_{\mathfrak{N}_{\delta}}\}$ denote a $\delta$-net of $\calF$ with respect to $\Vert \cdot \Vert _{\infty}$, Then, there exists a random index $\mathfrak{n}^*$ valued in $\{1, ..., \mathfrak{N}_{\delta}\}$ such that $|\hat f - f_{\mathfrak{n}^*}|\leq \delta$. A reasoning similar to \citet[Lemma 4.8, 4.11, 4.12]{Oga2024} leads to the following lemma.
\begin{lem}\label{lem:i-and-sigma-bound} Suppose that Assumptions \ref{assum:lip} and \ref{assum:parameter} hold. Let $C_{\star}=4 (C_b^2 + C_{\sigma}^2)\exp(4(L_b^2+L_{\sigma}^2))$. 
\begin{enumerate}
\item For every $\bar f \in \calF$ and  $\varepsilon \in (0,1)$,
\begin{align}\label{eq:ineq-I-term}
&\mathbb{E}\left[\frac{1}{NM}\sum_{n=1}^{N}\sum_{m=0}^{M-1}I_{t_m}^{(n)}\Big(\hat{f}(\bar{X}_{t_m}^{(n)})-\bar{f}(\bar{X}_{t_m}^{(n)})\Big)\right]\nonumber\\
&\leq \frac{\varepsilon}{4}\hat{\calR}(\hat{f}, f_0)+\frac{\varepsilon}{2}\hat{\calR}(\bar{f}, f_0)+ \frac{3L_b^2C_{\star}}{4\varepsilon}\Delta.
\end{align}
\item  There exists a constant $\bar{C}_{\sigma}$ depending only on $C_{\sigma}$ such that for every $n\in\{1, ..., N\}$,
%\begin{align}
%&\EE \left[\left| \frac{1}{M}\sum_{m=0}^{M-1}\big(\hat{f}-f_{\mathfrak{n}^*}\big)(\bar{X}_{t_m}^{(n)})\tilde\Sigma_{t_m}^{(n)}\right|\right]\leq \delta \sqrt{\frac{1}{2}L_{\sigma}^2 C_{\star}},\nonumber\\
%&\EE\left[\left| \frac{1}{M}\sum_{m=0}^{M-1}\big(\hat{f}-f_{\mathfrak{n}^*}\big)(\bar{X}_{t_m}^{(n)})\bar\Sigma_{t_m}^{(n)}\right|\right]\nonumber\\
%&\quad \leq \frac{C}{\sqrt{T}}\int_{0}^{\delta}\sqrt{\log \mathfrak{N}_{u}}\,\dd u .\nonumber
%\end{align}
\begin{align}
&\EE \left[\frac{1}{MN}\sum_{n=1}^{N}\sum_{m=0}^{M-1}\Sigma_{t_m}^{(n)}\big(\hat{f}-f_{\mathfrak{n}^*}\big)(\bar{X}_{t_m}^{(n)})\right]\nonumber\\
& \leq \delta \sqrt{\frac{1}{2}L_{\sigma}^2 C_{\star}}+\frac{\bar{C}_{\sigma}}{\sqrt{T}}\int_{0}^{\delta}\sqrt{\log \mathfrak{N}_{u}}\,\dd u.\label{eq:ineq-sigma-term}
\end{align}

%\item 
\end{enumerate}
\end{lem}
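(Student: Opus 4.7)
\textbf{Part 1 (bound \eqref{eq:ineq-I-term}).} I would write $\hat f - \bar f = (\hat f - f_0) - (\bar f - f_0)$ and apply the weighted Young's inequality $ab \leq \alpha a^2 + b^2/(4\alpha)$ to each piece, with weights $\alpha_1 = 1/\varepsilon$ and $\alpha_2 = 1/(2\varepsilon)$ chosen to generate precisely the target coefficients $\varepsilon/4$ and $\varepsilon/2$. This yields the pointwise bound
\[ I_{t_m}^{(n)} (\hat f - \bar f)(\bar X_{t_m}^{(n)}) \leq \tfrac{3}{2\varepsilon}(I_{t_m}^{(n)})^2 + \tfrac{\varepsilon}{4}(\hat f - f_0)^2(\bar X_{t_m}^{(n)}) + \tfrac{\varepsilon}{2}(\bar f - f_0)^2(\bar X_{t_m}^{(n)}), \]
reducing the task to estimating $\EE[(I_{t_m}^{(n)})^2 \mathbf{1}_{\bar X_{t_m}^{(n)} \in [0,1]^d}]$ (the indicator is free since $\hat f - \bar f$ has support in $[0,1]^d$).

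For this residual, I would apply Cauchy-Schwarz to the time integral defining $I_{t_m}^{(n)}$ together with the global Lipschitz continuity of $b^i$ to further reduce to the conditional SDE moment $\EE[|\bar X_s^{(n)} - \bar X_{t_m}^{(n)}|^2 \,|\, \bar X_{t_m}^{(n)} = x]$ for $x \in [0,1]^d$ and $s - t_m \leq \Delta \leq 1$. Plugging the integral form of the SDE, using the Lipschitz splittings $|b|^2 \leq 2C_b^2 + 2L_b^2|\cdot - x|^2$ and the analogous bound for $\sigma$, applying Cauchy-Schwarz on the drift integral and Ito isometry on the diffusion integral, and closing the recursion by Gronwall's inequality yield $\EE[|\bar X_s - \bar X_{t_m}|^2 | \bar X_{t_m} = x] \leq C_\star (s - t_m)$. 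Integrating in $s$ gives $\EE[(I_{t_m}^{(n)})^2 \mathbf 1_{\bar X_{t_m}^{(n)} \in [0,1]^d}] \leq L_b^2 C_\star \Delta / 2$, which combined with the Young step produces \eqref{eq:ineq-I-term}.

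\textbf{Part 2 (bound \eqref{eq:ineq-sigma-term}).} The main difficulty here is that $\hat f - f_{\mathfrak n^*}$ depends on the training set through the random index $\mathfrak n^*$, so the conditional mean-zero property of $\Sigma_{t_m}^{(n)}$ does not kill the expectation directly. My plan is to split $\Sigma_{t_m}^{(n)} = \Sigma_0 + \tilde \Sigma$, where $\Sigma_0 \coloneqq \tfrac{\sigma^i(\bar X_{t_m}^{(n)})}{\Delta}(B_{t_{m+1}}^{(n)} - B_{t_m}^{(n)})$ is the ``frozen-$\sigma$'' part and $\tilde \Sigma \coloneqq \tfrac{1}{\Delta}\int_{t_m}^{t_{m+1}}(\sigma^i(\bar X_s^{(n)}) - \sigma^i(\bar X_{t_m}^{(n)}))\,\dd B_s^{(n)}$ is the ``$\sigma$-perturbation'' part, and to treat the two halves differently. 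For $\tilde \Sigma$, Ito isometry combined with the Lipschitz continuity of $\sigma^i$ and the Gronwall bound from Part 1 gives $\EE[\tilde \Sigma^2 \mathbf 1_{\bar X_{t_m}^{(n)} \in [0,1]^d}] \leq L_\sigma^2 C_\star/2$ (notably \emph{uniform in $\Delta$}); a direct Cauchy-Schwarz together with $\|\hat f - f_{\mathfrak n^*}\|_\infty \leq \delta$ then produces the first term $\delta\sqrt{L_\sigma^2 C_\star/2}$, with no $NM$ cancellation required.

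For $\Sigma_0$, the factor $\sigma^i(\bar X_{t_m}^{(n)}) f(\bar X_{t_m}^{(n)})$ is $\mathcal F_{t_m}$-measurable for every deterministic $f \in \calF$, so the normalized sum $\tfrac{1}{MN}\sum_{n,m}\Sigma_0 \cdot f(\bar X_{t_m}^{(n)})$ is a sum of centered martingale differences, conditionally Gaussian given the grid-time observations, with conditional variance of order $\|f\|_\infty^2/(NT)$ (the $1/\Delta$ conditional variance of $\Sigma_0$ is absorbed by the $NM = NT/\Delta$ averaging). Passing from $\EE[\cdot]$ to a uniform supremum over pairs $(f, g) \in \calF^2$ with $\|f - g\|_\infty \leq \delta$, and applying a Dudley chaining argument along a dyadic sequence $\delta_k = 2^{-k}\delta$ of nets of $\calF$, produces the Dudley integral $\tfrac{\bar C_\sigma}{\sqrt T}\int_0^\delta \sqrt{\log \mathfrak N_u}\,\dd u$. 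The main obstacle is this chaining step: the two halves of $\Sigma$ have incomparable conditional variance scales (bounded versus $O(1/\Delta)$) and must be handled by distinct mechanisms, and the constants must be carefully tracked so that they assemble into the precise form of \eqref{eq:ineq-sigma-term}, following the strategy of \citet[Lemma 4.12]{Oga2024}.
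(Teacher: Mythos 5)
Your proposal is correct and follows essentially the same route as the paper: the same support-indicator insertion and weighted Young's inequality with weights calibrated to produce $\varepsilon/4$, $\varepsilon/2$ and $\tfrac{3}{2\varepsilon}$, the same Gr\"onwall-based moment bound $\EE\big[|\bar X_s-\bar X_{t_m}|^2\mathbf{1}_{\mathcal{A}_{t_m}}\big]\le C_\star(s-t_m)$ (imported in the paper as Lemma~A.1 from \citet{Oga2024}), and the same split $\Sigma=\bar\Sigma+\tilde\Sigma$ with a direct Cauchy--Schwarz for the perturbation part and a chaining/entropy-integral argument for the frozen-$\sigma$ part. The only cosmetic difference is that the paper delegates Part~2 entirely to \citet[Lemmas 4.11--4.12]{Oga2024} as per-trajectory bounds averaged over $n$, whereas you reconstruct their content (and phrase the chaining over the full $NM$ sum), which yields the same stated inequality.
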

Furthermore, the proof of Proposition~\ref{prop:upper-bound-train-error} relies on the following lemma.
\begin{lem}\label{lem:main-diff-from-oga} Suppose that Assumptions \ref{assum:lip} and \ref{assum:parameter} hold. For every $\varepsilon\in(0,1)$, we have %\bb blabla \dd 
%\begin{align}\label{eq:objective}
% &\mathbb{E}\left[\frac{1}{NM}\sum_{n=1}^{N}\sum_{m=0}^{M-1}\Sigma_{t_m}^{(n)}\big({f_{\mathfrak{n}^*}}(\bar{X}_{t_m}^{(n)})-{f}_0(\bar{X}_{t_m}^{(n)})\big)\right]\nonumber\\
% &\leq \frac{\sqrt{8(\log 2 +\mathfrak{N}_{\delta})}}{\sqrt{TN}}  \sqrt{  C_{\sigma}^2 \big(\hat{\mathcal{R}}(\hat{f},f_{0})+4F\delta\big) + 4F^2L_{\sigma}^2 C_{\star}\Delta }\nonumber\\
% &\leq \sqrt{\frac{4C_{\sigma}^2(\log 2\! +\!2\mathfrak{N}_{\delta})}{TN}}\sqrt{   \hat{\mathcal{R}}(\hat{f},f_{0})\!+\!4F\delta\!+ \!\frac{4F^2L_{\sigma}^2 C_{\star}}{C_{\sigma}^2}\Delta}\,.\nonumber
%\end{align} 
\begin{align}%\label{eq:objective}
 &\mathbb{E}\left[\frac{1}{NM}\sum_{n=1}^{N}\sum_{m=0}^{M-1}\Sigma_{t_m}^{(n)}\big({f_{\mathfrak{n}^*}}(\bar{X}_{t_m}^{(n)})-{f}_0(\bar{X}_{t_m}^{(n)})\big)\right]\nonumber\\
 &\leq \frac{\varepsilon}{4}\hat{\mathcal{R}}(\hat{f},f_{0}) + \gamma_{\varepsilon},\nonumber
 %&\leq \!\frac{\varepsilon}{4}\hat{\mathcal{R}}(\hat{f},f_{0}) \!+\!\varepsilon F\delta \!+ \!\varepsilon \frac{F^2L_{\sigma}^2 C_{\star}}{C_{\sigma}^2}\Delta\!+\!\frac{4C_{\sigma}^2}{\varepsilon TN}\big(\log 2\! +\!2\mathfrak{N}_{\delta}\big).\nonumber
%  &\leq \sqrt{\frac{4C_{\sigma}^2(\log 2\! +\!2\mathfrak{N}_{\delta})}{TN}}\sqrt{   \hat{\mathcal{R}}(\hat{f},f_{0})\!+\!4F\delta\!+ \!\frac{4F^2L_{\sigma}^2 C_{\star}}{C_{\sigma}^2}\Delta}\,.\nonumber
\end{align} 
where $\gamma_{\varepsilon}=\varepsilon F\delta + \varepsilon \frac{F^2L_{\sigma}^2 C_{\star}}{C_{\sigma}^2}\Delta+\frac{4C_{\sigma}^2}{\varepsilon TN}\big(\log 2 +2\log\mathfrak{N}_{\delta}\big)$.
\end{lem}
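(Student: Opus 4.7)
The plan is to handle the data-dependence of the random index $\mathfrak n^*$ by a union bound over the $\delta$-net $\mathcal F_{\text{net}} = \{f_1, \dots, f_{\mathfrak N_\delta}\}$ and then exploit the martingale structure of $\Sigma_{t_m}^{(n)}$ together with Young's inequality to produce the $\tfrac{\varepsilon}{4}\hat{\mathcal R}(\hat f, f_0)$ contribution. For each $k \in \{1,\dots,\mathfrak N_\delta\}$ define
\[Z_k \coloneqq \frac{1}{NM}\sum_{n,m}\Sigma_{t_m}^{(n)}(f_k-f_0)(\bar X_{t_m}^{(n)}),\qquad D_k \coloneqq \frac{1}{NM}\sum_{n,m}(f_k-f_0)^2(\bar X_{t_m}^{(n)}),\]
so the left-hand side of the lemma equals $\mathbb E[Z_{\mathfrak n^*}]$. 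The cancellation missing for a random index is that, for each \emph{fixed} $k$, the summands are martingale increments with respect to $(\mathcal F_{t_m}^{(n)})$, so $\mathbb E[Z_k]=0$ and the conditional quadratic variation equals $(f_k-f_0)^2(\bar X_{t_m}^{(n)})\cdot\mathbb E[(\Sigma_{t_m}^{(n)})^2\mid\mathcal F_{t_m}^{(n)}]$.

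I would first split $\Sigma_{t_m}^{(n)}=\Sigma_{t_m}^{(n),1}+\Sigma_{t_m}^{(n),2}$ with the frozen Euler part $\Sigma_{t_m}^{(n),1}\coloneqq \sigma^i(\bar X_{t_m}^{(n)})(B_{t_{m+1}}^{(n)}-B_{t_m}^{(n)})/\Delta$ and an It\^o--Taylor remainder $\Sigma_{t_m}^{(n),2}$. Using $|\sigma^i|\le C_\sigma$ on $[0,1]^d$ (which is where $(f_k-f_0)$ is supported) together with the SDE moment bound $\mathbb E[|\bar X_s-\bar X_{t_m}|^2\mid\mathcal F_{t_m}]\le C_\star(s-t_m)$ (the same $C_\star$ as in Lemma~\ref{lem:i-and-sigma-bound}), the conditional second moments are bounded by $C_\sigma^2/\Delta$ and by $L_\sigma^2 C_\star/2$ respectively. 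This induces $Z_k=Z_k^{(1)}+Z_k^{(2)}$, which I treat separately; the two pieces are the sources of the two qualitatively different contributions to $\gamma_\varepsilon$.

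For the main part $Z_k^{(1)}$, conditional on $\mathcal F_{t_m}^{(n)}$ each increment is Gaussian with variance proportional to $(f_k-f_0)^2(\bar X_{t_m}^{(n)})$, hence the conditional quadratic variation of the full sum $Z_k^{(1)}$ is bounded by $C_\sigma^2 D_k/(NT)$. A self-normalized martingale deviation inequality of de la Pe\~na--Lai--Shao type, combined with a union bound over the net, then yields
\[\mathbb E\!\left[\max_{k}\frac{Z_k^{(1),2}}{D_k+\delta^2}\right]\lesssim \frac{C_\sigma^2(\log 2+2\log \mathfrak N_\delta)}{NT}.\]
Writing $Z_{\mathfrak n^*}^{(1)}=\sqrt{D_{\mathfrak n^*}+\delta^2}\,\cdot\,Z_{\mathfrak n^*}^{(1)}/\sqrt{D_{\mathfrak n^*}+\delta^2}$, applying Young's inequality $ab\le\tfrac{\alpha}{2}a^2+\tfrac{1}{2\alpha}b^2$ with $\alpha=\varepsilon/4$, and using $D_{\mathfrak n^*}\le 2 D_{\hat f}+2\delta^2$ (which follows from $\|f_{\mathfrak n^*}-\hat f\|_\infty\le\delta$), I obtain a contribution bounded by $\tfrac{\varepsilon}{4}\hat{\mathcal R}(\hat f,f_0)+\varepsilon F\delta+\tfrac{4C_\sigma^2(\log 2+2\log\mathfrak N_\delta)}{\varepsilon NT}$, where $\varepsilon\delta^2/4\le\varepsilon F\delta$ was absorbed via $\delta<1\le F$. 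For the remainder $Z_k^{(2)}$ the crude bound $|f_k-f_0|_\infty\le 2F$ gives $\mathrm{Var}(Z_k^{(2)})\le 2F^2 L_\sigma^2 C_\star/(NM)=2F^2 L_\sigma^2 C_\star\Delta/(NT)$ uniformly in $k$; a sub-Gaussian maximal inequality plus Jensen then yields $\mathbb E[|Z_{\mathfrak n^*}^{(2)}|]\le CF\sqrt{L_\sigma^2 C_\star\Delta\log\mathfrak N_\delta/(NT)}$, and rewriting this as $\sqrt{AB}$ with $A=F^2 L_\sigma^2 C_\star\Delta/C_\sigma^2$ and $B=C_\sigma^2\log\mathfrak N_\delta/(NT)$ and applying $2\sqrt{AB}\le\varepsilon A+B/\varepsilon$ produces exactly the remaining two terms in $\gamma_\varepsilon$. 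Summing the contributions of $Z^{(1)}$ and $Z^{(2)}$ yields the claim.

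The main obstacle is the self-normalized concentration for $Z_k^{(1)}$: since $\sigma^i(\bar X_{t_m}^{(n)})$ depends on the driving Brownian path, conditioning on $\mathcal F_{t_m}^{(n)}$ gives Gaussianity of each individual increment rather than of the sum, and the quadratic variation $C_\sigma^2 D_k/(NT)$ is data-dependent. Passing from increment-wise to sum-wise control on the net, while keeping the normalization by $\sqrt{D_k+\delta^2}$ inside the maximum, requires either a Freedman/de la Pe\~na--Lai--Shao self-normalized inequality or a Dambis--Dubins--Schwarz time-change. The $\log 2$ contribution in $\gamma_\varepsilon$ comes from the usual doubling trick applied to $\pm Z_k$ to turn the one-sided maximal bound into a two-sided one before applying Young.
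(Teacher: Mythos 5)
Your overall architecture --- union bound over the $\delta$-net, self-normalized martingale concentration, and Young/AM--GM to extract $\tfrac{\varepsilon}{4}\hat{\mathcal R}(\hat f,f_0)$ --- matches the paper's, and your accounting of where the three terms of $\gamma_\varepsilon$ come from is essentially right. The gap is in the concentration step you yourself flag as the main obstacle: the bound $\EE\big[\max_k Z_k^{(1),2}/(D_k+\delta^2)\big]\lesssim C_\sigma^2(\log 2+2\log\mathfrak N_\delta)/(NT)$ does not follow from a de la Pe\~na--Lai--Shao inequality with the normalizer you chose. Normalizing a martingale by its quadratic variation alone does not give sub-Gaussian (or even exponentially integrable) tails --- $M_T/\sqrt{\langle M\rangle_T}$ can be heavy-tailed --- and the additive regularizer in the denominator must be comparable to $\EE[\langle M\rangle_T]$ for the canonical bound $\EE\big[\tfrac{\sqrt{c}}{\sqrt{\langle M\rangle_T+c}}\exp\{M_T^2/(2(\langle M\rangle_T+c))\}\big]\le 1$ to be exploitable: one removes the correction factor by Cauchy--Schwarz, which requires $\EE[(\langle M\rangle_T+c)/c]$ to be bounded. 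Your $c=\delta^2$ with $\delta=1/N$ is far smaller than $\EE[D_k]$, which can be of order one, so that step breaks. The paper's proof fixes exactly this by normalizing $\overline M(f)_T$ by $\bar A(f)_T+\overline D_f$ with $\overline D_f=\EE[\bar A(f)_T]+\varepsilon'$, so that $\EE[(\bar A+\overline D)/\overline D]\le 2$; the empirical risk then enters through $\EE[\bar A(f_{\mathfrak n^*})_T]$ rather than through a pathwise $D_{\mathfrak n^*}$ inside Young's inequality.

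A second instance of the same problem appears in your treatment of the remainder $Z_k^{(2)}$: you establish only a variance bound, but a ``sub-Gaussian maximal inequality'' over the net requires exponential moments, and the integrand $\sigma^i(\bar X_u)-\sigma^i(\bar X_{t_m})$ is unbounded, so sub-Gaussianity of $Z_k^{(2)}$ is not free; a pure $L^2$ union bound would cost $\sqrt{\mathfrak N_\delta}$ rather than $\sqrt{\log\mathfrak N_\delta}$. The paper sidesteps both issues by keeping $\Sigma_{t_m}^{(n)}$ whole: the frozen/Lipschitz split is performed inside the quadratic variation $\bar A(f_{\mathfrak n^*})_T$ (terms (I) and (II) in the appendix), where only second moments are needed, and a single self-normalized inequality covers everything. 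Your decomposition could be repaired by applying the $\langle M\rangle+\EE[\langle M\rangle]+\varepsilon'$ normalization to each of your two martingales separately, but as written the two key maximal inequalities are asserted rather than derived.
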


\begin{proof}[Proof sketch of Lemma \ref{lem:main-diff-from-oga}]
For a fixed function $f\in\calF$ and for a fixed $n$, we define the processes $\widehat{M}^{(n)}(f)$,  $\widehat{A}^{(n)}(f)$, $\overline{M}(f)$ and $ \bar{A}(f)$  by 
\begin{align}
&\widehat{M}^{(n)}(f)_s\coloneqq  \sum_{m=0}^{M-1}(f-f_0)(\bar{X}_{t_m}^{(n)})\int_{s\wedge t_m}^{s\wedge t_{m+1}}\!\!\!\sigma ^i\big(\bar X_u^{(n)}\big)\dd B_u^{(n)},\nonumber\\
%&\quad \sum_{m=0}^{M-1}\Big((f-f_0)(\bar{X}_{t_m}^{(n)})\Big)^2\int_{s\wedge t_m}^{s\wedge t_{m+1}}\left|\sigma^i\big(\bar X_u^{(n)}\big)\right|^2\dd u\nonumber\\
&\overline{M}(f)_s\coloneqq \frac{1}{N}\sum_{n=1}^{N} \widehat{M}^{(n)}(f),\nonumber\\
%&\: =\frac{1}{N}\sum_{n=1}^{N}\sum_{m=0}^{M-1}(f-f_0)(\bar{X}_{t_m}^{(n)})\int_{s\wedge t_m}^{s\wedge t_{m+1}}\sigma ^i\big(\bar X_u^{(n)}\big)\dd B_u^{(n)},\nonumber\\
&\widehat{A}^{(n)}(f)\coloneqq \langle\widehat{M}^{(n)}(f)\rangle,\nonumber\\
&  \bar{A}(f)\coloneqq \big\langle \overline{M}(f)\big\rangle=\frac{1}{N^2}\sum_{n=1}^{N}\widehat{A}^{(n)}(f). \nonumber
\end{align}
It follows that 
\begin{align}
 &\mathbb{E}\left[\frac{1}{NM}\sum_{n=1}^{N}\sum_{m=0}^{M-1}\Sigma_{t_m}^{(n)}(f_{\mathfrak{n}^*}(\bar{X}_{t_m}^{(n)})-{f}_0(\bar{X}_{t_m}^{(n)}))\right]\nonumber\\
 & \leq \mathbb{E}\left[\left|\frac{1}{T} \overline{M}(f_{\mathfrak{n}^*})_{T}\right|\right]=\tfrac{2}{T}\EE \left[\left|\overline{\xi}_{f_{\mathfrak{n}^*}}\right| \left(\bar{A}(f_{\mathfrak{n}^*})_T+\overline{D}_{f_{\mathfrak{n}^*}} \right)^{\frac{1}{2}}\right]\nonumber\\
 & \leq \tfrac{2}{T} \sqrt{\EE \left[\overline{\xi}_{f_{\mathfrak{n}^*}}^2\right] \left(2\EE \left[\bar{A}(f_{\mathfrak{n}^*})_T\right]+\varepsilon'\right)},\label{eq:decomposition-sigma-n}
\end{align}
where for every $f\in\calF$,  we denote by $\overline{\xi}_{f}\coloneqq \frac{\overline{M}(f)_T}{2\sqrt{\bar{A}(f)_T+\overline{D}_f}}$
and $\overline{D}_f=\EE\left[ \bar{A}(f)_T\right]+\varepsilon'$ for some $\varepsilon'>0$.
As $\overline{M}(f)$ is a continuous local martingale for every $f \in\calF$, it follows from \citet[Lemma 4.9]{Oga2024} that
\begin{align}
    \EE \left[\tfrac{\sqrt{\overline{D}_{f}}}{\sqrt{\bar{A}(f_{n})_T+\overline{D}_{f}}}\exp\left\{ 2\overline{\xi}_{f}^2\right\}\right]\leq 1.
\end{align}
Hence, an argument analogous to \citet[Lemma 4.10]{Oga2024} yields $\EE \left[ \overline{\xi}_{f_{n^{*}}}^2\right]\leq \frac{1}{4}\log 2 +\frac{1}{2}\log\mathfrak{N}_{\delta}$. 
%\begin{align}\label{eq:upper-bound-\xi}
%\EE \left[ \overline{\xi}_{f_{n^{*}}}^2\right]&\leq \frac{1}{2}\log 2 +\frac{1}{2}\mathfrak{N}_{\delta}.%\nonumber
%\end{align}

For $\EE \left[\bar{A}(f_{\mathfrak{n}^*})_T\right]$, we have
\begin{align}
    \EE \left[\bar{A}(f_{\mathfrak{n}^*})_T\right]%&=\EE \left[\frac{1}{N^2}\sum_{n=1}^{N}\widehat{A}^{(n)}(f_{\mathfrak{n}^*})_T\right]\nonumber\\
    &=\frac{1}{N}\cdot \EE \Big[\frac{1}{N}\sum_{n=1}^{N}\widehat{A}^{(n)}(f_{\mathfrak{n}^*})_T\Big],\nonumber 
\end{align}
and an argument analogous to \citet[Lemma 4.10]{Oga2024} yields
\begin{align}
    &\EE \left[\frac{1}{N}\sum_{n=1}^{N}\widehat{A}^{(n)}(f_{\mathfrak{n}^*})_T\right]\nonumber\\
%    & \leq  2 C_{\sigma}^2T\hat{\mathcal{R}}(f_{\mathfrak{n}^*},f_{0}) + 8F^2L_{\sigma}^2 C_{\star}T\Delta \nonumber\\
    & \leq 2 C_{\sigma}^2T\big(\hat{\mathcal{R}}(\hat{f},f_{0})+4F\delta\big) + 8F^2L_{\sigma}^2 C_{\star}T\Delta.\nonumber
\end{align}
Finally, inserting the above bounds into \eqref{eq:decomposition-sigma-n} yields
\begin{align}
 &\mathbb{E}\left[\frac{1}{NM}\sum_{n=1}^{N}\sum_{m=0}^{M-1}\Sigma_{t_m}^{(n)}({f_{\mathfrak{n}^*}}(\bar{X}_{t_m}^{(n)})-{f}_0(\bar{X}_{t_m}^{(n)}))\right] \nonumber\\
 %&\leq \frac{2}{T} \sqrt{\EE \left[\overline{\xi}_{f_{\mathfrak{n}^*}}^2\right] \left(2\EE \left[\bar{A}(f_{\mathfrak{n}^*})_T\right]+\varepsilon\right)}\nonumber\\
 &\leq \frac{1}{T} \sqrt{\log 2 +2\log\mathfrak{N}_{\delta}}\nonumber\\
 &\: \cdot \sqrt{\frac{4}{N}\left[   C_{\sigma}^2T \big(\hat{\mathcal{R}}(\hat{f},f_{0})+4F\delta\big) + 4F^2L_{\sigma}^2 C_{\star}T\Delta \right]+\varepsilon'}.\nonumber
\end{align}
By letting $\varepsilon'\rightarrow0$ and by applying the AM-GM inequality $\sqrt{xy}\leq \frac{\varepsilon}{4}x+\frac{1}{\varepsilon}y$, we get the desired inequality. 
\end{proof}
\begin{proof}[Proof sketch of Proposition \ref{prop:upper-bound-train-error}]
By inserting the results of Lemmas \ref{lem:i-and-sigma-bound} and \ref{lem:main-diff-from-oga} into \eqref{eq:decomposition-train-error}, 
we obtain, for a  an arbitrary function $\bar{f}$  in $\calF$, for $\varepsilon\in(0,1)$,
\begin{align}
\hat{\calR}(\hat{f}, f_0)\leq \frac{\Psi^{\calF}(\hat{f})}{1-\varepsilon}+ \frac{1+\varepsilon}{1-\varepsilon}{\calR}(\bar f, f_0)+\frac{\gamma'_{\varepsilon}}{1-\varepsilon},\nonumber
\end{align}
where we apply  $\hat{\calR}(\bar f, f_0)={\calR}(\bar f, f_0)$ and 
%\begin{align}
%    \hat{\calR}(\bar f, f_0)={\calR}(\bar f, f_0)\leq \Vert \bar f-f_0\Vert_{\infty},\nonumber
%\end{align} 
\begin{align}
    \gamma'_{\varepsilon}\!=&\frac{3L_b^2C_{\star}}{2\varepsilon}\Delta+\delta\sqrt{2L_{\sigma}^2C_{\star}}+\frac{2\bar{C}_{\sigma}}{\sqrt{T}}\int_{0}^{\delta}\!\!\!\sqrt{\log \mathfrak{N}_{u}}\,\dd u\nonumber\\
    &+2\varepsilon F\delta + 2\varepsilon \frac{F^2L_{\sigma}^2 C_{\star}}{C_{\sigma}^2}\Delta+\frac{8C_{\sigma}^2}{\varepsilon TN}\big(\log 2 +2\log\mathfrak{N}_{\delta}\big).\nonumber
\end{align}
By applying Lemma \ref{lem:lemma413-in-oga}, we have
\begin{align}
\int_{0}^{\delta}\sqrt{\log \mathfrak{N}_{u}}\,\dd u\leq &\int_{0}^{\delta} \sqrt{C s(L \log s+\log d)}\,\dd u \nonumber\\
& +\sqrt{Cs}\int_{0}^{\delta}\sqrt{-\log u }\,\dd u. \nonumber
\end{align}
Using the change of variable $t=-\log u$ and the inequality  $\Gamma\big(\tfrac{3}{2}, -\log (\delta)\big)\leq \tfrac{3}{4}\delta \sqrt{-\log \delta}$ valid for $\delta < \exp(-\frac{1}{2})$, we set $\varepsilon=\frac{1}{2}$ and $\delta=\frac{1}{N}$ to obtain \begin{align}
&\hat{\calR}(\hat{f}, f_0)\leq 2\Psi^{\calF}(\hat{f})+ 3{\calR}(\bar f, f_0)\nonumber\\
&\qquad +2L_{\sigma}^2C_{\star}\big(3+\tfrac{F^2}{C_{\sigma^2}}\big)\Delta+ C_1\frac{1}{N}+C_2\frac{\log N}{N},\nonumber 
\end{align}
with
\begin{align}
    C_1= & 2\sqrt{2L_\sigma^2 C_{\star}}+\frac{4\bar{C}_{\sigma}}{\sqrt{T}}\sqrt{Cs(L \log s+\log d)}+2F\nonumber\\
   & +\frac{32 C_{\sigma}^2}{T}\big( \log 2 +2 Cs (L \log s+\log d)\big)\nonumber
\end{align}
and $  C_2= \frac{6\bar{C}_{\sigma}\sqrt{Cs}}{\sqrt{T}} +\frac{64 C_{\sigma}^2Cs}{T}$. We conclude by noting that ${\calR}(\bar f, f_0) \leq \Vert \bar f - f_0 \Vert_\infty$, and that the above inequality holds for any arbitrary function $\bar{f} \in \calF$.
%After a change of variable $t=-\log u$, we obtain 
%\begin{align}
%    \int_{0}^{\delta}\sqrt{-\log u }\,\dd u
    %=\int_{-\log (\delta)}^{+\infty}\sqrt{t}\exp(-t)\dd t
%    =\Gamma\big(\tfrac{3}{2}, -\log (\delta)\big),\nonumber
%\end{align}
%where $\Gamma(s, x)$ denotes the upper incomplete gamma function. Therefore, for $\delta < \exp(-\tfrac{1}{2})$, it follows from \citet[Proposition 2.7]{Pinelis2020} that $\Gamma\big(\tfrac{3}{2}, -\log (\delta)\big)\leq \tfrac{3}{4}\delta \sqrt{-\log \delta}$. 
%Finally, by considering $\varepsilon=\frac{1}{2}$ and $\delta=\frac{1}{N}$, we obtain
%\begin{align}
%&\hat{\calR}(\hat{f}, f_0)\leq 2\Psi^{\calF}(\hat{f})+ 3{\calR}(\bar f, f_0)\nonumber\\
%&\qquad +2L_{\sigma}^2C_{\star}\big(3+\tfrac{F^2}{C_{\sigma^2}}\big)\Delta+ C_1\frac{1}{N}+C_2\frac{\log N}{N},\nonumber 
%\end{align}
%where 
%\begin{align}
%    C_1= & 2\sqrt{2L_\sigma^2 C_{\star}}+\frac{4\bar{C}_{\sigma}}{\sqrt{T}}\sqrt{Cs(L \log s+\log d)}+2F\nonumber\\
%   & +\frac{32 C_{\sigma}^2}{T}\big( \log 2 +2 Cs (L \log s+\log d)\big)\nonumber
%\end{align}
%and $  C_2= \frac{6\bar{C}_{\sigma}\sqrt{Cs}}{\sqrt{T}} +\frac{64 C_{\sigma}^2Cs}{T}$. We conclude by noting that ${\calR}(\bar f, f_0) \leq \Vert \bar f - f_0 \Vert_\infty$, and observing that the above inequality holds for any arbitrary function $\bar{f} \in \calF$.
\end{proof}

%%%%%%%%%%%%%%%%%%%%%%%%%%%%%%%%%%%%%%
%%%%%%%%%%%%%%%%%%%%%%%%%%%%%%%%%%%%%%
%%%%%%%%%%%%%%%%%%%%%%%%%%%%%%%%%%%%%%
%\subsection{Proof of the main theorem}
\begin{proof}[Proof  of Theorem \ref{thm:main}]
Theorem~\ref{thm:main} follows directly from Propositions~\ref{prop:test-train-error-comparaison} and~\ref{prop:upper-bound-train-error} by setting $\bar\varepsilon=\frac{\log N}{N}$. 
%    \begin{align}
%        &\mathcal{R}(\hat{f},f_{0})
%\leq 4\Psi^{\calF}(\hat{f})+ 6\inf_{f\in\mathcal{F}}\Vert f-f_0\Vert_{\infty}^2\nonumber\\
%&+\mathfrak{C}\left(F^2 \Delta +\frac{s(L \log s + \log d)+F^2}{N}+(s+F^2)\frac{\log N}{N}\right) \nonumber
%& +\frac{44F^2\log\big(Cs(L\log s+\log d+\log 4F-\log \bar\varepsilon)\big)}{N},\nonumber
%    \end{align}
\end{proof}

\begin{proof}[Proof of Corollary \ref{cor:cor-of-thm-main}]
Theorem 1 in \citet{SchmidtHieber2020} provides an upper bound of $\inf_{f\in\nnset}\calR(f,f_0)$.  Corollary~\ref{cor:cor-of-thm-main} then follows directly from Theorem~\ref{thm:main} and Theorem 1 in \citet{SchmidtHieber2020}.
\end{proof}

%%%%%%%%%%%%%%%%%%%%%%%%%%%%%%%%%%%%%%
%%%%%%%%%%%%%%%%%%%%%%%%%%%%%%%%%%%%%%
%%%%%%%%%%%%%%%%%%%%%%%%%%%%%%%%%%%%%%
%\section{Conclusion}\label{Sec:conclusion}

%%%%%%%%%%%%%%%%%%%%%%%%%%%%%%%%%%%%%%
%%%%%%%%%%%%%%%%%%%%%%%%%%%%%%%%%%%%%%
%%%%%%%%%%%%%%%%%%%%%%%%%%%%%%%%%%%%%%
\section*{Acknowledgments}
This work was supported by the French National Research Agency (ANR) under the France 2030 program, reference “ANR-23-EXMA-0011” (MIRTE Project), and originates from the Master’s thesis of Yuzhen Zhao, completed during her internship at the LEDa laboratory, Université Paris Dauphine–PSL, under the supervision of Yating Liu.

%%%%%%%%%%%%%%%%%%%%%%%%%%%%%%%%%%%%%%
%%%%%%%%%%%%%%%%%%%%%%%%%%%%%%%%%%%%%%
%%%%%%%%%%%%%%%%%%%%%%%%%%%%%%%%%%%%%%
%\appendix
%\section{References}
\bibliography{aaai2026}

%\input{ReproducibilityChecklist}
%%%%%%%%%%%%%%%%%%%%%%%%%%%%%%%%%%%%%%
%%%%%%%%%%%%%%%%%%%%%%%%%%%%%%%%%%%%%%
%%%%%%%%%%%%%%%%%%%%%%%%%%%%%%%%%%%%%%
 % \end{document}

%%%%%%%%%%%%%%%%%%%%%%%%%%%%%%%%%%%%%%
%%%%%%%%%%%%%%%%%%%%%%%%%%%%%%%%%%%%%%
%%%%%%%%%%%%%%%%%%%%%%%%%%%%%%%%%%%%%%

\clearpage
\onecolumn

\appendix

% \section*{Appendix}

\section{Detailed Proofs}

\setcounter{thm}{0}
\renewcommand{\thethm}{A.\arabic{thm}}

\renewcommand{\theequation}{A.\arabic{equation}}
\setcounter{equation}{0}

% (will not be submitted to AAAI, only for the arxiv version)
For every $n\in\{1, \dots, N\}$ and every $t\in[0,T]$, define \[\calA_{t}^{(n)}\coloneqq \big\{\bar{X}_t^{(n)}\in [0,1]^d\big\}.\] Moreover, for every $n\in\{1, \dots, N\}$ and  $m\in\{0, \dots, M-1\}$, define
\[\bar{\Sigma}^{(n)}_{t_m}=\frac{1}{\Delta}\int_{t_m}^{t_{m+1}}\sigma^i(\bar{X}^{(n)}_{t_m})\dd B_{s}^{(n)} \quad \text{ and }\quad \tilde{\Sigma}^{(n)}_{t_m}=\frac{1}{\Delta}\int_{t_m}^{t_{m+1}}\big(\sigma^i(\bar{X}^{(n)}_s)-\sigma^i(\bar{X}^{(n)}_{t_m})\big)\dd B_{s}^{(n)}. \]
It is obvious that ${\Sigma}^{(n)}_{t_m}=\bar{\Sigma}^{(n)}_{t_m}+\tilde{\Sigma}^{(n)}_{t_m}$. Recall that 
$I^{(n)}_{t_m}=\frac{1}{\Delta}\int_{t_m}^{t_{m+1}}(b^i(\bar{X}^{(n)}_s)-b^i(\bar{X}^{(n)}_{t_m}))\dd s$ as given in equation~\eqref{eq:decompos-Y}, and the constant  $C_{\star}=4 (C_b^2 + C_{\sigma}^2)\exp(4(L_b^2+L_{\sigma}^2))$ defined in Lemma \ref{lem:i-and-sigma-bound}. We  also abbreviate $\mathcal{F}$ for the neural network function class $\nnset$, and $\mathfrak{N}_\delta$ for the covering number $\mathfrak{N}(\delta, \nnset, \|\cdot\|_{\infty})$. The following lemma is a direct consequence of \citet[Lemma~4.6, Corollary~4.1, and Corollary~4.2]{Oga2024}. 
\begin{lem}\label{lem:upperboundClip} Suppose that Assumptions \ref{assum:lip} and \ref{assum:parameter} hold.
For every $n\in\{1,\dots, N\}$,  $m\in\{0, \dots, M-1\}$, and $s\in[t_m, t_{m+1}]$, we have 
\begin{enumerate}
    \item[1.] $\EE \big[ |\bar X^{(n)}_s-\bar X^{(n)}_{t_m}|^2 \mathbf{1}_{\mathcal{A}_{t_m}^{(n)}}\big]\leq C_{\star}(s-t_m)$,
    \item[2.] $\EE \big[ (I_{t_{m}}^{(n)})^2 \Iamn\big]\leq \frac{1}{2}L_b^2 C_{\star}\Delta$,
    \item[3.] $\EE \big[ (\tilde \Sigma_{t_{m}}^{(n)})^2 \Iamn\big]\leq \frac{1}{2}L_{\sigma}^2 C_{\star}$.
\end{enumerate}
\end{lem}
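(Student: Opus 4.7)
The plan is to establish Part~1 first, since Parts~2 and~3 then follow from it by routine stochastic-calculus manipulations. For Part~1, I would introduce the auxiliary quantity $\Phi(s) \coloneqq \EE\big[|\bar X^{(n)}_s - \bar X^{(n)}_{t_m}|^2 \Iamn\big]$ for $s \in [t_m, t_{m+1}]$, rewrite the increment via the SDE integral form
\[
\bar X^{(n)}_s - \bar X^{(n)}_{t_m} = \int_{t_m}^{s} b(\bar X^{(n)}_u)\,\dd u + \int_{t_m}^{s} \sigma(\bar X^{(n)}_u)\,\dd B_u^{(n)},
\]
and apply $(a+b)^2 \leq 2a^2 + 2b^2$, treating the drift term with the Cauchy--Schwarz inequality and the diffusion term with the It\^o isometry. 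The crucial observation that makes the latter step legal is that $\calA_{t_m}^{(n)} \in \mathcal{F}_{t_m}$, so $\Iamn$ commutes with conditioning on $\mathcal{F}_{t_m}$ and in particular passes through the It\^o isometry.

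Next, I would use the Lipschitz hypothesis to pivot back to $\bar X^{(n)}_{t_m}$: on $\calA_{t_m}^{(n)}$ one has $|b(\bar X^{(n)}_{t_m})| \leq C_b$ and $|\sigma(\bar X^{(n)}_{t_m})| \leq C_\sigma$, so that
\[
|b(\bar X^{(n)}_u)|^2 \Iamn \leq 2C_b^2 + 2L_b^2\,|\bar X^{(n)}_u - \bar X^{(n)}_{t_m}|^2 \Iamn,
\]
and analogously for $\sigma$. Substituting into the previous display and using $s-t_m \leq \Delta \leq 1$, I would obtain an integral inequality of the form
\[
\Phi(s) \leq 4(C_b^2 + C_\sigma^2)(s-t_m) + 4(L_b^2 + L_\sigma^2)\int_{t_m}^{s} \Phi(u)\,\dd u,
\]
and Grönwall's lemma would then deliver $\Phi(s) \leq C_\star(s-t_m)$ with the constant given in the statement.

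Part~2 would follow by Cauchy--Schwarz applied to $I_{t_m}^{(n)} = \tfrac{1}{\Delta}\int_{t_m}^{t_{m+1}}(b^i(\bar X^{(n)}_u) - b^i(\bar X^{(n)}_{t_m}))\,\dd u$ together with the Lipschitz property of $b^i$ (inherited from that of $b$) and Part~1, yielding
\[
\EE\big[(I_{t_m}^{(n)})^2 \Iamn\big] \leq \frac{L_b^2}{\Delta}\int_{t_m}^{t_{m+1}} \Phi(u)\,\dd u \leq \tfrac{1}{2}L_b^2 C_\star \Delta.
\]
Part~3 is even more direct: the It\^o isometry (again using $\Iamn \in \mathcal{F}_{t_m}$) combined with the Lipschitz property of $\sigma^i$ gives
\[
\EE\big[(\tilde\Sigma_{t_m}^{(n)})^2 \Iamn\big] \leq \frac{L_\sigma^2}{\Delta^2}\int_{t_m}^{t_{m+1}} \Phi(u)\,\dd u \leq \tfrac{1}{2} L_\sigma^2 C_\star.
\]

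The main, and only modest, obstacle is handled in Part~1: although $\bar X^{(n)}_{t_m}$ is confined to $[0,1]^d$ on $\calA_{t_m}^{(n)}$, the excursion $\bar X^{(n)}_u$ for $u \in (t_m, s]$ can leave the unit cube, so $|b|$ and $|\sigma|$ cannot be pointwise bounded by $C_b$ and $C_\sigma$ along the interval. The two-step Lipschitz-then-Grönwall argument is precisely what converts the compact-domain information available at time $t_m$ into an effective moment control over the whole sub-interval, which is then plugged into the elementary estimates for Parts~2 and~3.
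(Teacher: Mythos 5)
Your proposal is correct: the localization of $b$ and $\sigma$ at time $t_m$ via the Lipschitz bound, followed by It\^o isometry/Cauchy--Schwarz and Gr\"onwall's lemma (using $\Delta\le 1$ to absorb the $(s-t_m)^2$ term), delivers exactly the constant $C_{\star}=4(C_b^2+C_\sigma^2)\exp(4(L_b^2+L_\sigma^2))$, and Parts~2 and~3 follow from Part~1 precisely as you describe. The paper does not reprove this lemma but simply cites \citet[Lemma~4.6, Corollaries~4.1 and~4.2]{Oga2024}, and your argument is the standard one underlying those results, with the key measurability point ($\calA_{t_m}^{(n)}\in\mathcal{F}_{t_m}$, so $\Iamn$ passes through the conditional It\^o isometry) correctly identified.
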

We now present the proof of Lemma~\ref{lem:i-and-sigma-bound}.

\begin{proof}[Proof of Lemma \ref{lem:i-and-sigma-bound}]
1. Consider two arbitrary functions $f_1,\,f_2\in\nnset$. Then, since $f_1,\,f_2$ have support in $[0,1]^d$, we have, for every $c>0$, 
\begin{align}
&\mathbb{E}\left[\frac{1}{NM}\sum_{n=1}^{N}\sum_{m=0}^{M-1}I_{t_m}^{(n)}\Big(f_1(\bar{X}_{t_m}^{(n)})-f_2(\bar{X}_{t_m}^{(n)})\Big)\right]=\mathbb{E}\left[\frac{1}{NM}\sum_{n=1}^{N}\sum_{m=0}^{M-1}I_{t_m}^{(n)}\mathbf{1}_{\calA_{t_m}^{(n)}}\Big(f_1(\bar{X}_{t_m}^{(n)})-f_2(\bar{X}_{t_m}^{(n)})\Big)\right]\nonumber\\
&\quad\leq \frac{1}{NM}\sum_{n=1}^{N}\sum_{m=0}^{M-1} \left\{\mathbb{E}\left[\frac{1}{2c}\big(I_{t_m}^{(n)}\big)^2\mathbf{1}_{\calA_{t_m}^{(n)}}\right]+\EE \left[\frac{c}{2}\Big(f_1(\bar{X}_{t_m}^{(n)})-f_2(\bar{X}_{t_m}^{(n)})\Big)^2\right]\right\},\label{eq:i-term-decomp}
\end{align}
where the last inequality comes from the AM–GM inequality  $ab\leq \frac{a^2}{2c}+\frac{cb^2}{2}$ for every $a,b\in \RR_{+}$. For the first term of \eqref{eq:i-term-decomp}, it follows that 
\begin{align}
    &\frac{1}{NM}\sum_{n=1}^{N}\sum_{m=0}^{M-1} \mathbb{E}\left[\frac{1}{2c}\big(I_{t_m}^{(n)}\big)^2\mathbf{1}_{\calA_{t_m}^{(n)}}\right]= \frac{1}{NM}\frac{1}{2c}\sum_{n=1}^{N}\sum_{m=0}^{M-1} \mathbb{E}\left[\frac{1}{\Delta^2}\left(\int_{t_m}^{t_{m+1}}\big[b^i(\bar{X}^{(n)}_s)-b^i(\bar{X}^{(n)}_{t_m})\big]\dd s\right)^2\mathbf{1}_{\calA_{t_m}^{(n)}}\right]\nonumber\\
    &\quad \leq \frac{1}{NM}\frac{1}{2c}\sum_{n=1}^{N}\sum_{m=0}^{M-1} \mathbb{E}\left[\frac{L_b^2}{\Delta^2}\left(\int_{t_m}^{t_{m+1}} \left|\bar{X}^{(n)}_s-\bar{X}^{(n)}_{t_m}\right|\mathbf{1}_{\calA_{t_m}^{(n)}}\,\dd s\right)^2\right]\nonumber\\
    &\quad\leq \frac{1}{NM}\frac{1}{2c}\frac{L_b^2}{\Delta}\sum_{n=1}^{N}\sum_{m=0}^{M-1} \mathbb{E}\left[\int_{t_m}^{t_{m+1}} \left|\bar{X}^{(n)}_s-\bar{X}^{(n)}_{t_m}\right|^2\mathbf{1}_{\calA_{t_m}^{(n)}}\,\dd s\right]\nonumber\\
    &\quad \leq \frac{1}{NM}\frac{1}{2c}\frac{L_b^2}{\Delta}\sum_{n=1}^{N}\sum_{m=0}^{M-1} \int_{t_m}^{t_{m+1}} \mathbb{E}\left[\left|\bar{X}^{(n)}_s-\bar{X}^{(n)}_{t_m}\right|^2\mathbf{1}_{\calA_{t_m}^{(n)}}\right]\,\dd s\nonumber\\
    &\quad \leq \frac{1}{NM}\frac{1}{2c}\frac{L_b^2}{\Delta}\sum_{n=1}^{N}\sum_{m=0}^{M-1} \int_{t_m}^{t_{m+1}} C_{\star}(s-t_m)\,\dd s=  \frac{1}{2c}\frac{L_b^2C_{\star}}{2}\Delta, \nonumber
\end{align}
where the first inequality follows from Assumption~\ref{assum:lip}, the second uses the Cauchy–Schwarz inequality, 
the third is a direct application of Fubini’s theorem, and the fourth follows from Lemma~\ref{lem:upperboundClip}.

For the second term of \eqref{eq:i-term-decomp}, we have 
\begin{align}
    \frac{1}{NM}\sum_{n=1}^{N}\sum_{m=0}^{M-1}\EE \left[\frac{c}{2}\Big(f_1(\bar{X}_{t_m}^{(n)})-f_2(\bar{X}_{t_m}^{(n)})\Big)^2\right]=\frac{c}{2}    \EE \left[\frac{1}{NM}\sum_{n=1}^{N}\sum_{m=0}^{M-1}\Big(f_1(\bar{X}_{t_m}^{(n)})-f_2(\bar{X}_{t_m}^{(n)})\Big)^2\right]=\frac{c}{2}  \hat{\calR}(f_1, f_2).\nonumber
\end{align}
Consequently, 
\begin{align}
    \mathbb{E}\left[\frac{1}{NM}\sum_{n=1}^{N}\sum_{m=0}^{M-1}I_{t_m}^{(n)}\Big(f_1(\bar{X}_{t_m}^{(n)})-f_2(\bar{X}_{t_m}^{(n)})\Big)\right]\leq \frac{c}{2}  \hat{\calR}(f_1, f_2)+\frac{1}{2c}\frac{L_b^2C_{\star}}{2}\Delta.
\end{align}
Finally, by setting $(f_1, f_2, c)$ respectively to $(\hat f, f_0, \frac{\varepsilon}{2})$ $(\bar f, f_0, \varepsilon)$, and applying the triangle inequality, we obtain \eqref{eq:ineq-I-term}.

\smallskip

\noindent 2. It follows from Lemma 4.11 and 4.12 of \citet{Oga2024} that for every fixed $n\in\{1, ..., N\}$, 
\begin{align}
&\EE \left[\left|\frac{1}{M}\sum_{m=0}^{M-1}\tilde\Sigma_{t_m}^{(n)}\big(\hat{f}-f_{\mathfrak{n}^*}\big)(\bar{X}_{t_m}^{(n)})\right|\right]\leq \delta \sqrt{\frac{1}{2}L_{\sigma}^2 C_{\star}} \: \text{and}\: \EE \left[\left|\frac{1}{M}\sum_{m=0}^{M-1}\bar\Sigma_{t_m}^{(n)}\big(\hat{f}-f_{\mathfrak{n}^*}\big)(\bar{X}_{t_m}^{(n)})\right|\right] \leq\frac{\bar{C}_{\sigma}}{\sqrt{T}}\int_{0}^{\delta}\sqrt{\log \mathfrak{N}_{u}}\,\dd u,\nonumber
%+\frac{\bar{C}_{\sigma}}{\sqrt{T}}\int_{0}^{\delta}\sqrt{\log \mathfrak{N}_{u}}\,\dd u\nonumber
\end{align}
which implies directly \eqref{eq:ineq-sigma-term} since $\Sigma_{t_m}^{(n)}=\bar \Sigma_{t_m}^{(n)}+\tilde \Sigma_{t_m}^{(n)}$.
\end{proof}

\smallskip 
Next, we provide the detailed proofs of Lemma~\ref{lem:main-diff-from-oga} and Proposition \ref{prop:upper-bound-train-error}.
 
\begin{proof}[Detailed proof of Lemma \ref{lem:main-diff-from-oga}]
For a fixed function $f\in\calF$ and for a fixed $n\in\{1,..., N\}$, we define the processes $\widehat{M}^{(n)}(f)=(\widehat{M}^{(n)}(f)_s)_{s\in[0,T]}$,  $\widehat{A}^{(n)}(f)=(\widehat{A}^{(n)}(f)_s)_{s\in[0,T]}$, $\overline{M}(f)=(\overline{M}(f)_s)_{s\in[0,T]}$ and $ \bar{A}(f)=(\bar{A}(f)_s)_{s\in[0,T]}$  by 
\begin{align}
&\widehat{M}^{(n)}(f)_s\coloneqq \sum_{m=0}^{M-1}(f-f_0)(\bar{X}_{t_m}^{(n)})\int_{s\wedge t_m}^{s\wedge t_{m+1}}\sigma ^i\big(\bar X_u^{(n)}\big)\dd B_u^{(n)},\nonumber\\
%&\quad \\
&\overline{M}(f)_s\coloneqq \frac{1}{N}\sum_{n=1}^{N} \widehat{M}^{(n)}(f) =\frac{1}{N}\sum_{n=1}^{N}\sum_{m=0}^{M-1}(f-f_0)(\bar{X}_{t_m}^{(n)})\int_{s\wedge t_m}^{s\wedge t_{m+1}}\sigma ^i\big(\bar X_u^{(n)}\big)\dd B_u^{(n)},\nonumber\\
&\widehat{A}^{(n)}(f)_s\coloneqq \langle\widehat{M}^{(n)}(f)\rangle_s=\sum_{m=0}^{M-1}\Big((f-f_0)(\bar{X}_{t_m}^{(n)})\Big)^2\int_{s\wedge t_m}^{s\wedge t_{m+1}}\left|\sigma^i\big(\bar X_u^{(n)}\big)\right|^2\dd u,\nonumber\nonumber\\
&  \bar{A}(f)_s\coloneqq \big\langle \overline{M}(f)\big\rangle_s=\frac{1}{N^2}\sum_{n=1}^{N}\widehat{A}^{(n)}(f)_s. \nonumber
\end{align}
It follows that 
\begin{align}
 &\mathbb{E}\left[\frac{1}{NM}\sum_{n=1}^{N}\sum_{m=0}^{M-1}\Sigma_{t_m}^{(n)}\big(f_{\mathfrak{n}^*}(\bar{X}_{t_m}^{(n)})-{f}_0(\bar{X}_{t_m}^{(n)})\big)\right]\leq \mathbb{E}\left[\left|\frac{1}{T} \overline{M}(f_{\mathfrak{n}^*})_{T}\right|\right]=\frac{2}{T}\EE \left[\left|\overline{\xi}_{f_{\mathfrak{n}^*}}\right| \left(\bar{A}(f_{\mathfrak{n}^*})_T+\overline{D}_{f_{\mathfrak{n}^*}} \right)^{\frac{1}{2}}\right]\nonumber\\
 & \leq \frac{2}{T} \sqrt{\EE \left[\overline{\xi}_{f_{\mathfrak{n}^*}}^2\right] \left(2\EE \left[\bar{A}(f_{\mathfrak{n}^*})_T\right]+\varepsilon'\right)},\label{eq:decomposition-sigma-n}
\end{align}
where for every $f\in\calF$,  we denote by $\overline{\xi}_{f}\coloneqq \frac{\overline{M}(f)_T}{2\sqrt{\bar{A}(f)_T+\overline{D}_f}}$
and $\overline{D}_f=\EE\left[ \bar{A}(f)_T\right]+\varepsilon'$ for some $\varepsilon'>0$.

For $\EE \big[\overline{\xi}_{f}^2\big] $, as $\overline{M}(f)$ is a continuous local martingale for every $f \in\calF$, it follows from \cite[Lemma 4.9]{Oga2024} that
\begin{align}
    \EE \left[\frac{\sqrt{\overline{D}_{f}}}{\sqrt{\bar{A}(f_{n})_T+\overline{D}_{f}}}\exp\left\{ 2\overline{\xi}_{f}^2\right\}\right]\leq 1.
\end{align}
Moreover, 
\begin{align}
\EE \left[ \exp\left(\overline{\xi}_{f_{n^{*}}}^2\right)\right]\leq \sqrt{
\EE\left[ \frac{\sqrt{\overline{D}_{f_{\mathfrak{n}^*}}}}{\sqrt{\bar{A}(f_{n^{*}})_T+\overline{D}_{f_{\mathfrak{n}^*}}}}\exp\left\{ 2\overline{\xi}_{f_{n^{*}}}^2\right\}\right]\EE \left[\frac{\sqrt{\bar{A}(f_{n^{*}})_T+\overline{D}_{f_{\mathfrak{n}^*}}}}{\sqrt{\overline{D}_{f_{\mathfrak{n}^*}}}}\right]
}.\nonumber
\end{align}
Since 
\begin{align}
\EE\left[ \frac{\sqrt{\overline{D}_{f_{\mathfrak{n}^*}}}}{\sqrt{\bar{A}(f_{n^{*}})_T+\overline{D}_{f_{\mathfrak{n}^*}}}}\exp\left\{ 2\overline{\xi}_{f_{n^{*}}}^2\right\}\right]&\leq \EE \left[ \max_{1\leq n\leq \mathfrak{N}(\delta, \mathcal{F}, \Vert \cdot \Vert_{\infty})}\frac{\sqrt{\overline{D}_{f_{n}}}}{\sqrt{\bar{A}(f_{n})_T+\overline{D}_{f_{n}}}}\exp\left\{ 2\overline{\xi}_{f_{n}}^2\right\}\right]\leq \mathfrak{N}(\delta, \mathcal{F}, \Vert \cdot \Vert_{\infty}),\nonumber
\end{align}
and 
\begin{align}
    \EE \left[\frac{\sqrt{\bar{A}(f_{n^{*}})_T+\overline{D}_{f_{\mathfrak{n}^*}}}}{\sqrt{\overline{D}_{f_{\mathfrak{n}^*}}}}\right]\leq \sqrt\EE \left[\frac{\bar{A}(f_{n^{*}})_T+\overline{D}_{f_{\mathfrak{n}^*}}}{\overline{D}_{f_{\mathfrak{n}^*}}}\right]\leq \sqrt{2},
\end{align}
we obtain
\begin{align}
\EE \left[ \exp\left(\overline{\xi}_{f_{n^{*}}}^2\right)\right]\leq 2^{\frac{1}{4}}\sqrt{\mathfrak{N}(\delta, \mathcal{F}, \Vert \cdot \Vert_{\infty})}.
\end{align}
Hence, the Jensen inequality yields 
\begin{align}
\EE \left[ \overline{\xi}_{f_{n^{*}}}^2\right]\leq \log \EE \left[ \exp\left(\overline{\xi}_{f_{n^{*}}}^2\right)\right]\leq \frac{1}{2}\log 2 +\frac{1}{2}\mathfrak{N}(\delta, \mathcal{F}, \Vert \cdot \Vert_{\infty}).
\end{align}

For $\EE \left[\bar{A}(f_{\mathfrak{n}^*})_T\right]$, we have
\begin{align}
    \EE \left[\bar{A}(f_{\mathfrak{n}^*})_T\right]=\EE \left[\frac{1}{N^2}\sum_{n=1}^{N}\widehat{A}^{(n)}(f_{\mathfrak{n}^*})_T\right]=\frac{1}{N}\cdot \EE \left[\frac{1}{N}\sum_{n=1}^{N}\widehat{A}^{(n)}(f_{\mathfrak{n}^*})_T\right]
\end{align}
and 
\begin{align}
&\EE \left[\frac{1}{N}\sum_{n=1}^{N}\widehat{A}^{(n)}(f_{\mathfrak{n}^*})\right]\nonumber\\
&=\EE \left[\frac{1}{N}\sum_{n=1}^{N}\sum_{m=0}^{M-1}\Big((f_{\mathfrak{n}^*}-f_0)(\bar{X}_{t_m}^{(n)})\Big)^2\int_{t_m}^{t_{m+1}}\left|\sigma^i\big(\bar X_u^{(n)}\big)\right|^2\dd u \right]\nonumber\\
&\leq \EE \left[\frac{1}{N}\sum_{n=1}^{N}\sum_{m=0}^{M-1}\Big((f_{\mathfrak{n}^*}-f_0)(\bar{X}_{t_m}^{(n)})\Big)^2\int_{t_m}^{t_{m+1}}\left[2 \left|\sigma^i\big(\bar X_{t_m}^{(n)}\big)\right|^2+2\left|\sigma^i\big(\bar X_u^{(n)}\big)-\sigma^i\big(\bar X_{t_m}^{(n)}\big)\right|^2\right]\dd u \right]\nonumber\\
&\leq \EE \left[\frac{1}{N}\sum_{n=1}^{N}\sum_{m=0}^{M-1}\Big((f_{\mathfrak{n}^*}-f_0)(\bar{X}_{t_m}^{(n)})\Big)^2\int_{t_m}^{t_{m+1}}\left[2 \left|\sigma^i\big(\bar X_{t_m}^{(n)}\big)\right|^2\right]\dd u \right]\nonumber\\
&\quad + \EE \left[\frac{1}{N}\sum_{n=1}^{N}\sum_{m=0}^{M-1}\Big((f_{\mathfrak{n}^*}-f_0)(\bar{X}_{t_m}^{(n)})\Big)^2\int_{t_m}^{t_{m+1}}\left[2\left|\sigma^i\big(\bar X_u^{(n)}\big)-\sigma^i\big(\bar X_{t_m}^{(n)}\big)\right|^2\right]\dd u \right]\nonumber\\
&\eqqcolon \mathrm{(I)} + \mathrm{(II)}.\nonumber
\end{align}
Since $f_{\mathfrak{n}^*}$ and $f_0$ have support in $[0,1]^d$,  we have for (I)
\begin{align}
 \mathrm{(I)}\leq 2 C_{\sigma}^2T \EE \left[\frac{1}{NM}\sum_{n=1}^{N}\sum_{m=0}^{M-1}\Big((f_{\mathfrak{n}^*}-f_0)(\bar{X}_{t_m}^{(n)})\Big)^2\right]=2 C_{\sigma}^2T\hat{\mathcal{R}}(f_{\mathfrak{n}^*},f_{0}),\nonumber
\end{align}
and 
\begin{align}
    &\hat{\mathcal{R}}(f_{\mathfrak{n}^*},f_{0})-\hat{\mathcal{R}}(\hat{f},f_{0})=\mathbb{E}\left[\frac{1}{NM}\sum_{n=1}^{N} \sum_{m=0}^{M-1}\left[(\hat{f}_{\mathfrak{n}^*}(\bar{X}_{t_m}^{(n)})-f_{0}(\bar{X}_{t_m}^{(n)}))^2-\hat{f}(\bar{X}_{t_m}^{(n)})-f_{0}(\bar{X}_{t_m}^{(n)}))^2\right]\right]\nonumber\\
    &\leq \mathbb{E}\left[\frac{1}{NM}\sum_{n=1}^{N} \sum_{m=0}^{M-1}\left[\left(\hat{f}_{\mathfrak{n}^*}(\bar{X}_{t_m}^{(n)})-\hat{f}(\bar{X}_{t_m}^{(n)})\right)\left(\hat{f}_{\mathfrak{n}^*}(\bar{X}_{t_m}^{(n)})+\hat{f}(\bar{X}_{t_m}^{(n)})-2f_{0}(\bar{X}_{t_m}^{(n)})
    \right)\right]\right]\leq \delta(2F+2C_{b})\leq 4F\delta.\nonumber
\end{align}

For (II), we have
\begin{align}
 \mathrm{(II)}&\leq 8F^2\EE \left[\sum_{m=0}^{M-1}\int_{t_m}^{t_{m+1}}\left|\sigma^i\big(\bar X_u^{(1)}\big)-\sigma^i\big(\bar X_{t_m}^{(1)}\big)\right|^2\mathbf{1}_{\mathcal{A}_{t_m}^{(1)}}\dd u \right]\leq  8F^2\EE \left[\sum_{m=0}^{M-1}\int_{t_m}^{t_{m+1}}L_{\sigma}^2\left|\bar X_u^{(1)}-\bar X_{t_m}^{(1)}\right|^2\mathbf{1}_{\mathcal{A}_{t_m}^{(1)}}\dd u \right]\nonumber\\
 &\leq 8F^2L_{\sigma}^2\sum_{m=0}^{M-1}\int_{t_m}^{t_{m+1}}\EE \left[\left|\bar X_u^{(1)}-\bar X_{t_m}^{(1)}\right|^2\mathbf{1}_{\mathcal{A}_{t_m}^{(1)}}\right]\dd u \leq 8F^2L_{\sigma}^2 C_{\star}T\Delta, \nonumber
\end{align}
where the last inequality follows from Lemma \ref{lem:upperboundClip}. Consequently, 
\begin{align}
    &\EE \left[\frac{1}{N}\sum_{n=1}^{N}\widehat{A}^{(n)}(f_{\mathfrak{n}^*})_T\right]\leq 2 C_{\sigma}^2T\big(\hat{\mathcal{R}}(\hat{f},f_{0})+4F\delta\big) + 8F^2L_{\sigma}^2 C_{\star}T\Delta.\nonumber
\end{align} 
Finally, inserting the above bounds into \eqref{eq:decomposition-sigma-n} yields
\begin{align}
 &\mathbb{E}\left[\frac{1}{NM}\sum_{n=1}^{N}\sum_{m=0}^{M-1}\Sigma_{t_m}^{(n)}({f_{\mathfrak{n}^*}}(\bar{X}_{t_m}^{(n)})-{f}_0(\bar{X}_{t_m}^{(n)}))\right] \nonumber\\
 &\quad \leq \frac{1}{T} \sqrt{\log 2 +2\log\mathfrak{N}_{\delta}}\cdot \sqrt{\frac{4}{N}\left[   C_{\sigma}^2T \big(\hat{\mathcal{R}}(\hat{f},f_{0})+4F\delta\big) + 4F^2L_{\sigma}^2 C_{\star}T\Delta \right]+\varepsilon'}.\nonumber
\end{align}
By letting $\varepsilon'\rightarrow0$ and by applying the AM-GM inequality $\sqrt{xy}\leq \frac{\varepsilon}{4}x+\frac{1}{\varepsilon}y$ for some $\varepsilon\in(0,1)$, we get the desired inequality. 
\end{proof}

\begin{proof}[Detailed proof of Proposition \ref{prop:upper-bound-train-error}]
By inserting the results of Lemmas \ref{lem:i-and-sigma-bound} and \ref{lem:main-diff-from-oga} into \eqref{eq:decomposition-train-error}, 
we obtain, for a  an arbitrary function $\bar{f}$  in $\calF$, for $\varepsilon\in(0,1)$,
\begin{align}
\hat{\calR}(\hat{f}, f_0)\leq \frac{\Psi^{\calF}(\hat{f})}{1-\varepsilon}+ \frac{1+\varepsilon}{1-\varepsilon}{\calR}(\bar f, f_0)+\frac{\gamma'_{\varepsilon}}{1-\varepsilon},\nonumber
\end{align}
where we apply  $\hat{\calR}(\bar f, f_0)={\calR}(\bar f, f_0)$ and 
%\begin{align}
%    \hat{\calR}(\bar f, f_0)={\calR}(\bar f, f_0)\leq \Vert \bar f-f_0\Vert_{\infty},\nonumber
%\end{align} 
\begin{align}
    \gamma'_{\varepsilon}\!=&\frac{3L_b^2C_{\star}}{2\varepsilon}\Delta+\delta\sqrt{2L_{\sigma}^2C_{\star}}+\frac{2\bar{C}_{\sigma}}{\sqrt{T}}\int_{0}^{\delta}\!\!\!\sqrt{\log \mathfrak{N}_{u}}\,\dd u+2\varepsilon F\delta + 2\varepsilon \frac{F^2L_{\sigma}^2 C_{\star}}{C_{\sigma}^2}\Delta+\frac{8C_{\sigma}^2}{\varepsilon TN}\big(\log 2 +2\log\mathfrak{N}_{\delta}\big).\nonumber
\end{align}
By applying Lemma \ref{lem:lemma413-in-oga}, we have
\begin{align}
\int_{0}^{\delta}\sqrt{\log \mathfrak{N}_{u}}\,\dd u\leq &\int_{0}^{\delta} \sqrt{C s(L \log s+\log d)}\,\dd u  +\sqrt{Cs}\int_{0}^{\delta}\sqrt{-\log u }\,\dd u. \nonumber
\end{align}
After a change of variable $t=-\log u$, we obtain 
\begin{align}
    \int_{0}^{\delta}\sqrt{-\log u }\,\dd u
    %=\int_{-\log (\delta)}^{+\infty}\sqrt{t}\exp(-t)\dd t
    =\Gamma\big(\tfrac{3}{2}, -\log (\delta)\big),\nonumber
\end{align}
where $\Gamma(s, x)$ denotes the upper incomplete gamma function. Therefore, for $\delta < \exp(-\tfrac{1}{2})$, it follows from \citet[Proposition 2.7]{Pinelis2020} that $\Gamma\big(\tfrac{3}{2}, -\log (\delta)\big)\leq \tfrac{3}{4}\delta \sqrt{-\log \delta}$. 

Finally, by considering $\varepsilon=\frac{1}{2}$ and $\delta=\frac{1}{N}$, we obtain
\begin{align}
&\hat{\calR}(\hat{f}, f_0)\leq 2\Psi^{\calF}(\hat{f})+ 3{\calR}(\bar f, f_0) +2L_{\sigma}^2C_{\star}\big(3+\tfrac{F^2}{C_{\sigma^2}}\big)\Delta+ C_1\frac{1}{N}+C_2\frac{\log N}{N},\nonumber 
\end{align}
where 
\begin{align}
    C_1= & 2\sqrt{2L_\sigma^2 C_{\star}}+\frac{4\bar{C}_{\sigma}}{\sqrt{T}}\sqrt{Cs(L \log s+\log d)}+2F +\frac{32 C_{\sigma}^2}{T}\big( \log 2 +2 Cs (L \log s+\log d)\big)\nonumber
\end{align}
and $  C_2= \frac{6\bar{C}_{\sigma}\sqrt{Cs}}{\sqrt{T}} +\frac{64 C_{\sigma}^2Cs}{T}$. We conclude by noting that ${\calR}(\bar f, f_0) \leq \Vert \bar f - f_0 \Vert_\infty$, and observing that the above inequality holds for any arbitrary function $\bar{f} \in \calF$.
\end{proof}

%\begin{lem}\label{lem:phi-N-example} For $x\in \RD$ and $z\in\RR$, define   $s(x)\coloneqq\sum_{i=1}^{d} x_i$ and $\phi(z) \coloneqq 2z\exp(-z^2)-8z\exp(-2z^2)$. Consider the first component $b_1 : x=(x_1, ..., x_d)\in\RD\mapsto -x_1 + \phi(5s(x))\in \RR$ of the drift coefficient $b$ defined in \eqref{eq:b-example}. Then $\phi_N$ defined by \eqref{eq:def-phi-N} for $b_1$ is $N^{-1}$. 
%\end{lem}
%\begin{proof}[Proof of Lemma \ref{lem:phi-N-example}]
%We begin by representing the function $ b_1 $ in the compositional form $ g_2 \circ g_1 \circ g_0 $, where
%\begin{align}
%   & g_0 :  x=(x_1, \dots , x_d)\in\RR^d\: \longmapsto \: \big(x_1, s(x)\big) \in \RR^2\nonumber\\
%   & g_1 : y=(y_1, y_2)\in \RR^2 \: \longmapsto \: \big(-y_1, 2y_2, \exp(-y_2^2), -8y_2, \exp(-2y_2^2)\big)\in\RR^5\nonumber\\
%   & g_2 : z=(z_1, \dots, z_5)\in\RR^5 \: \longmapsto \: z_1+z_2z_3+z_4z_5. \nonumber
%\end{align}
%Then, a straightforward computation shows that $\beta_i = +\infty$ for $i = 0, 1, 2$, which implies that $\phi_N = N^{-1}$.  %$ \beta_0 = +\infty $, $ \beta_1 = +\infty  $, and $ \beta_2 = +\infty $, which implies $ \phi_N = N^{-1} $.
%\end{proof}

\section{Experimental and Computational Details}
\setcounter{thm}{0}
\renewcommand{\thethm}{B.\arabic{thm}}

\subsection{Data Generation}

We simulate sample paths of \eqref{eq:sde} using the Euler–Maruyama discretization scheme. The time horizon is set to $T = 1$ and is discretized into $M = 100$ time steps, with the time step $\Delta = T/M = 0.01$. 

For each experiment, we generate $N$ independent paths $\mathcal{D}_N := \{ \bar{X}^{(n)} \}_{n=1}^{N}$ as training set and an $N' = 1000$ independent paths $\mathcal{D}_{\text{test}} := \{ \tilde{X}^{(n)}\}_{n=1}^{N'}$ for evaluation. Hyperparameter tuning is conducted using a separate independent validation set $\mathcal{D}_{\text{valid}} := \{ \tilde{\tilde{X}}^{(n)}\}_{n=1}^{1000}$.

\subsection{Hyperparameter Tuning}
As the empirical performance may vary across neural network architectures and training settings, we perform hyperparameter tuning for each combination of $(N, d)$ on  $\mathcal{D}_{\text{valid}}$ to ensure comparability of estimation errors across different combinations of $(N, d)$. Specifically, the tuning is performed over the following candidate sets:
\begin{itemize}
    \item network architectures: $\mathbf{p}\in \{(d, 16, 16, 1), (d, 16, 32, 16, 1), (d, 16, 32, 32, 16, 1)\},$
    \item percentage of non-zero parameters in total $s_\text{ratio}$: $\{0.25, 0.5, 0.75\}$.
\end{itemize}
The sparsity constraint $s$ used in the estimator class $\mathcal{F}(L, \mathbf{p}, s, F)$ is computed as
$$
s = \left\lfloor s_{\text{per}} \cdot \sum_{i=0}^{L} (p_i + 1)\cdot p_{i+1} \right\rfloor,
$$
where $L$ is the number of hidden layers and $(p_0,\dots, p_{L+1})$ is the layer width vector.

\subsection{Training Procedure}
The network is trained using the Adam optimizer with learning rate of 0.001 and batch size of 256. At each epoch, the training samples are shuffled and fed to the model in mini-batches to update the network parameters. 
Following each optimizer step, we enforce the desired sparsity level by retaining only the top-$s$ parameters (in magnitude) and setting all remaining weights to zero. We then clip all parameters to lie within the range $[-1, 1]$.

After every epoch, the model is evaluated on the validation set to compute the validation error. Training is performed for up to 200 epochs, and the final number of epochs is selected via early stopping, which is triggered when the validation error fails to improve for 20 consecutive epochs.

\subsection{Evaluation}

To account for the randomness in data generation and optimization process, for each pair $(N, d)$, we repeat the experiment 50 times with different random seeds, resulting in a series of different estimators and test set $((\widehat{f}_{N,d}^{(j)}), \mathcal{D}_{\text{test}}^{(j)})_{j=1}^{50}$. Let $E_{N,d}^{(j)}$ denote the generalization error computed in the $j$-th run with training size $N$ and dimension $d$. It is defined as 
$$E_{N,d}^{(j)}:=\widetilde{\mathcal{R}}_N(\widehat{f}_{N,d}^{(j)},b_1),$$
where the error is evaluated on the test dataset $\mathcal{D}_{\text{test}}^{(j)}$.  
For each $(N, d)$, we compute:
\begin{equation}
\label{eq:minmax-interval}
\bar{E}_{N,d} := \frac{1}{50} \sum_{j=1}^{50} E_{N,d}^{(j)}, \quad
E_{N,d}^{\text{lower}} := \bar{E}_{N,d} - t_{0.975}^{(49)} \cdot \tau_{N,d}, \quad
E_{N,d}^{\text{upper}} := \bar{E}_{N,d} + t_{0.975}^{(49)} \cdot \tau_{N,d}.
\end{equation}
where  $\tau_{N,d} = \frac{1}{\sqrt{50}} \big( \frac{1}{49} \sum_{j=1}^{50} \big(E_{N,d}^{(j)} - \bar{E}_{N,d} \big)^2 \big)^{1/2}$ and $t_{0.975}^{(49)}$ is the 97.5 percentile of the Student’s $t$ distribution with 49 degrees of freedom. 

Log–log plots are used to present the average empirical error $\bar{E}_{N,d}$ and interval $[E_{N,d}^{\text{lower}}, E_{N,d}^{\text{upper}}]$ with respect to training size $N$.

\subsection{Memory Cost Discussion}
The $B$-spline estimator involves constructing the matrix $\mathbf{B} \in \mathbb{R}^{NM \times p}$, where $p = (K_N + 3)^d$ denotes the total number of basis functions, and $K_N$ is the number of interior knots per coordinate. The memory cost is thus dominated by the storage of the matrix $\mathbf{B}$:
$$
\text{Memory}_{\text{$B$-spline}} \sim O(NM p) = O\left( NM (K_N + 3)^d \right).
$$
As $d$ increases, the number of basis functions $p$ grows exponentially, resulting in an exponential blow-up of memory cost. For instance, even with moderate values of $K_N$ (e.g., $K_N=8$), when $d=5$, we have
$$
p = (8 + 3)^5 = 161051,
$$
leading to a matrix $\mathbf{B}$ of size $(100N) \times 161051$ entries. Assuming each entry is stored as a 64-bit float (8 bytes), the total memory required is
$$
\text{Memory}_{\text{$B$-spline}} \approx 100N \times 161051 \times 8 \text{ bytes}.
$$
In particular, for $d=5$, the required memory far exceeds the computational capacity of a standard laptop (Apple M2 chip, 16 GB RAM). 

In contrast, the neural network estimator requires storing only the network parameters and minibatch data during training. The memory cost scales as
$$
\text{Memory}_{\text{NN}} \sim O\left( \sum_{i=1}^{L} (p_{i-1} + 1)p_{i} + \text{batch\_size} \times d \right),
$$
where $p_i$ is the layer width and $d$ is the input dimension.

In particular, for $d=5$, under the architecture $(d, 16, 32, 16, 1)$, with batch size $256$, the total number of parameters is $1185$, yielding under 64-bit float (8 bytes):
$$
\text{Memory}_{\text{NN}} \approx (1185+256\times 5)\times8 \text{ bytes}.
$$
Thus, the overall memory required is approximately 19 KB. Crucially, the memory requirement of the neural network estimator grows linearly with $d$ and remains independent of the sample size $N$, making it more scalable for high-dimensional settings compared to the $B$-spline estimator.

\end{document}